\theoremstyle{definition}
\theoremstyle{plain}
\newtheorem{theorem}{Theorem}
\theoremstyle{plain}
\newtheorem{lemma}{Lemma}
\theoremstyle{plain}
\newtheorem{assump}{Assumption}
\theoremstyle{plain}
\newtheorem{corollary}{Corollary}
\theoremstyle{plain}
\newtheorem{proposition}{Proposition}
\theoremstyle{remark}
\newtheorem*{remark}{Remark}
\theoremstyle{remark}
\begin{document}

%

%

\twocolumn[

\aistatstitle{Provable Accelerated Bayesian Optimization with Knowledge Transfer}

\aistatsauthor{ Haitao Lin \And Boxin Zhao \And  Mladen Kolar \And Chong Liu }

\aistatsaddress{University of Chicago \And  University of Chicago \And USC \& MBZUAI \And University at Albany, SUNY } ]

\begin{abstract}
We study how to accelerate Bayesian optimization (BO) on a target task by transferring historical knowledge from related source tasks. Existing work on BO with knowledge transfer either lacks theoretical guarantees or achieves the same regret as BO in the non-transfer setting, $\tilde{\mathcal{O}}(\sqrt{T \gamma_f})$, where $T$ is the number of evaluations of the target function and $\gamma_f$ denotes its information gain. In this paper, we propose the DeltaBO algorithm, which builds a novel uncertainty-quantification approach on the difference function $\delta$ between the source and target functions, which are allowed to belong to different Reproducing Kernel Hilbert Spaces (RKHSs). Under mild assumptions, we prove that the regret of DeltaBO is of order $\tilde{\mathcal{O}}(\sqrt{T(T/N+\gamma_\delta)})$, where $N$ denotes the number of evaluations from source tasks and typically $N \gg T$. In many applications, source and target tasks are similar, which implies that $\gamma_\delta$ can be much smaller than $\gamma_f$. Empirical studies on both real-world hyperparameter-tuning tasks and synthetic functions show that DeltaBO outperforms other baseline methods and also verify our theoretical claims. Our code is available on GitHub.\footnote{\url{https://github.com/pippahtlin/DeltaBO}}
\end{abstract}

\section{INTRODUCTION}\label{sec:intro}

Modern deep learning models are a driving force behind today’s AI revolution. Training these models requires computer scientists to carefully tune hyperparameters \citep{li2020system}, such as the learning rate and batch size. In materials design \citep{marzari2021electronic}, engineers search for optimal parameters, such as temperature and humidity, to develop new materials that satisfy specific performance criteria. Similarly, in drug discovery \citep{drews2000drug}, researchers must screen vast libraries of small molecules to identify promising drug candidates.

Across these tasks, even with domain expertise, it is extremely challenging to model performance as an explicit function of the input parameters. Owing to its ability to optimize black-box functions, Bayesian optimization (BO) has emerged as a powerful tool in such settings. By enabling adaptive sequential decision-making, BO significantly improves efficiency, precision, and the pace of innovation. 
Recently, \citet{gongora2020bayesian} successfully used BO to find structural parameters that maximized the energy absorption of a structure under compression, reducing the 1{,}800 experiments required by a linear grid search to only 100. In drug discovery, BO offers a principled framework for maximizing a drug candidate's binding efficacy by selecting experimental conditions such as pressure and solution concentration~\citep{korovina2020chembo,shields2021bayesian}. 

Unfortunately, in many applications, even a single evaluation can be costly and time-consuming. For example, completing a 90-epoch training run of the ResNet-50 model on the ImageNet-1k dataset using an NVIDIA M40 GPU requires 14 days \citep{you2018imagenet}, which means that a BO algorithm for this task can run for only 27 iterations in a single year. Even worse, \citet{liang2021scalable} showed that in penicillin production, the widely recognized TuRBO method \citep{eriksson2019scalable} needs 1{,}000 iterations to find the best solution, which amounts to 20 years if each iteration takes one week. 

While advances such as parallelism or multi-fidelity BO can help reduce the evaluation cost or the number of evaluations, these approaches are orthogonal to another important opportunity for acceleration. In practice, we sometimes have historical knowledge about a task, potentially from related tasks, and this can be a valuable information resource. For example, before computer scientists start a new training session for a model, they may already have logged some training checkpoints. Therefore, the key question is: Can we further accelerate the BO process with knowledge transfer from related tasks, ideally in a \emph{provable} way?

In this paper, we answer this question in the affirmative by proposing the DeltaBO algorithm, a BO method that comes with solid theoretical guarantees as well as empirical improvement in a knowledge-transfer setting. We emphasize that our goal is not to outperform state-of-the-art BO systems, but rather to provide a complementary transfer-learning framework with provable guarantees.

\begin{table}[t]
\centering
\caption{Comparison of regret bounds. While Env-GP, Diff-GP, and DeltaBO all address Bayesian optimization with knowledge transfer, our bound clearly highlights the advantage of leveraging related source tasks to accelerate the optimization process.}
\label{tab:regret}
\resizebox{0.49\textwidth}{!}{
\begin{tabular}{ccc}
    \toprule
    \textbf{Algorithms} & \textbf{Regrets} & \textbf{Transfer}\\
    \midrule
    GP-UCB \citep{srinivas2010gaussian} & $\tilde{\mathcal{O}}(\sqrt{T\gamma_f})$  & No \\ 
    Env-GP \citep{shilton2017regret} & $\tilde{\mathcal{O}}(\sqrt{T\gamma_f})$  & Yes \\ 
    Diff-GP \citep{shilton2017regret} & $\tilde{\mathcal{O}}(\sqrt{T\gamma_f})$ & Yes \\ 
    DeltaBO (ours) & $\tilde{\mathcal{O}}(\sqrt{T(T/N+\gamma_\delta)})$  & Yes \\ 
    \bottomrule
\end{tabular}
}
\end{table}

\textbf{Contributions.} Our contributions are as follows:
\begin{itemize}
    \item We systematically study BO with knowledge transfer and propose the DeltaBO algorithm with solid theoretical guarantees.
    \item The regret of DeltaBO is proven to be of order $\tilde{\mathcal{O}}(\sqrt{T(T/N+\gamma_\delta)})$. To the best of our knowledge, this is the first regret bound that shows dependence on $N$, the number of evaluations from source tasks, where typically $N \gg T$. In many applications, source and target tasks are similar, which further implies that $\gamma_\delta \ll \gamma_f$. See Table \ref{tab:regret} for the regret comparison.
    \item Empirical studies on both real-world hyperparameter-tuning tasks and synthetic functions show that DeltaBO outperforms other baseline methods and verify our theoretical claims.
\end{itemize}

\textbf{Technical novelties.} Our technical novelties are as follows:
\begin{itemize}
	\item We adopt an additive decomposition of the target function. \citet{poloczek2017multi} proposed a related additive structure in which each information source is modeled as an additive perturbation of a latent objective; however, their focus is multi-information-source optimization, and they do not derive regret bounds that quantify transfer acceleration with respect to the number of source evaluations. While the additive model (eq. \eqref{eq:add-model}) was also used in previous work~\citep{shilton2017regret}, our assumption is strictly more general since the source and difference functions are allowed to come from two \emph{independent} Gaussian Processes (GPs).
	\item At the heart of our algorithm design is a novel uncertainty-quantification approach built on the difference function so that each evaluation of the target function can be interpreted as a biased observation of it, with the bias given by the source function.
	\item Using the Schur complement, we extend the analysis of GP-UCB \citep{srinivas2010gaussian} by proving that the variance sequence is monotonically non-increasing throughout the iterations, which is a key component in bounding the total variance of queries from the unknown source function.
\end{itemize}

\textbf{Notations.} 
We use standard asymptotic notation throughout the paper. 
The notation $\mathcal{O}(f(n))$ denotes a quantity bounded in absolute value 
by a constant multiple of $f(n)$. The notation $\tilde{\mathcal{O}}(f(n))$ 
suppresses logarithmic factors, i.e., 
$\tilde{\mathcal{O}}(f(n)) = \mathcal{O}(f(n)\,\mathrm{polylog}(n))$. 
Finally, $o(f(n))$ denotes a term such that $o(f(n))/f(n) \to 0$ as $n \to \infty$.

\section{RELATED WORK}\label{sec:rw}

\textbf{Without theoretical guarantees.} Over the past decade, transfer learning has emerged as a powerful strategy for accelerating BO by leveraging prior experience from related tasks.
\citet{swersky2013multi} pioneered this line of work by applying multi-task GPs to share information across tasks, \citet{yogatama2014efficient} proposed constructing a response surface from deviations relative to the per-dataset mean, and \citet{poloczek2016warm} introduced a general warm-start framework. Concurrently, \citet{wistuba2016two} proposed a two-stage surrogate model that approximates response functions and then combines them by similarity. Later, \citet{wistuba2018scalable} introduced a scalable GP framework that weights surrogates via product of experts or kernel regression, and \citet{feurer2018practical} proposed a ranking-weighted GP ensemble. Building on these ideas, \citet{perrone2018scalable} proposed a multi-task adaptive Bayesian linear regression surrogate for efficient transfer in hyperparameter optimization. Extending this line of work, \citet{perrone2019learning} reframed transfer as \emph{search space design}, learning task-adaptive reduced spaces that guide BO to promising regions from historical tasks, especially in large-scale settings.

In parallel, \citet{law2019hyperparameter} enabled knowledge transfer by learning the shared representation of the training data. Following this work, \citet{li2022transfer2} designed the BO search space by combining promising regions with the voting result from the GP classifier. \citet{salinas2020quantile} developed a quantile-based method that leverages Gaussian copulas to model task relationships. More recently, \citet{tighineanu2022transfer} proposed a hierarchical GP framework that provides a principled way to capture shared structure across tasks. For a comprehensive survey of this growing literature, we refer readers to \citet{bai2023transfer}.

\textbf{With theoretical guarantees.} 
While much of the literature on BO with knowledge transfer has focused on heuristic design, only a limited number of works provide theoretical guarantees. For instance, \citet{wang2018regret} established regret bounds for meta Bayesian optimization under unknown GP priors, and \citet{wang2024pre} studied the use of pre-trained GPs for Bayesian optimization, providing both methodological insights and a theoretical characterization of transfer from prior data. Although these works advance the understanding of meta BO, they are less directly related to our setting. The most relevant work is that of \citet{shilton2017regret}, who derived regret bounds for BO with transfer and proposed the Diff-GP method. However, Diff-GP requires the strong assumption that the source and target functions share the same kernel, whereas DeltaBO directly models the difference function and allows it to belong to a distinct reproducing kernel Hilbert space (RKHS). In addition, Diff-GP corrects the bias of every source sample after each target query, which becomes computationally expensive when the number of source samples is large. By contrast, DeltaBO computes the posterior mean and covariance of the source function only once, resulting in a substantially more efficient procedure. In summary, DeltaBO not only relaxes the modeling assumptions but also achieves improved computational scalability while enjoying stronger theoretical guarantees.

Multi-information-source optimization (e.g.~\citet{poloczek2017multi}) considers settings in which an expensive objective can be evaluated directly or approximated by multiple biased, noisy information sources with different query costs. Their approach models each source as the true objective plus a source-specific GP discrepancy and adaptively selects both the design and the source to query. Unlike these frameworks, DeltaBO assumes a fixed historical source dataset that cannot be queried further and focuses on accelerating a single target task rather than adaptively allocating evaluations across fidelities. Thus, our setting is complementary to multi-fidelity BO.

\section{PRELIMINARIES}\label{sec:pre}

In this section, we introduce the background on Bayesian optimization (BO) and GP-UCB, and formalize our problem setting of BO with knowledge transfer.

\subsection{Bayesian Optimization}

In BO, the objective is to identify the global maximizer of an unknown black-box function $f:\mathcal{D} \to \mathcal{Y}$:
\begin{align*}
x_* \in \arg \max_{x \in \mathcal{D}} f(x),
\end{align*}
where $\mathcal{D}$ denotes the input domain or decision set, and $\mathcal{Y} \subseteq \mathbb{R}$ is the range of function values. The function $f$ is considered a black box because its analytical form and derivatives are unavailable.  

Learning about $f$ is possible only through sequential, noisy, zeroth-order evaluations. Over $T$ rounds, the observation at iteration $t \in [T]$ takes the form
\begin{align}\label{eq:feedback}
y_t = f(x_t) + \varepsilon_t,
\end{align}
where $x_t \in \mathcal{D}$ is the query point and $\varepsilon_t \sim \mathcal{N}(0, \sigma^2)$ represents Gaussian observation noise. The performance of a BO algorithm is typically measured by its cumulative regret,
\begin{align*}
R_T = \sum_{t=1}^T \bigl( f(x_*) - f(x_t) \bigr),
\end{align*}
which compares the value of the best query so far to the global optimum. An algorithm is said to be \emph{no-regret} if $\lim_{T \to \infty} R_T/T = 0$. We refer readers to \citet{frazier2018tutorial} for a tutorial on BO.

\subsection{GP-UCB Algorithm}

Gaussian processes (GPs) \citep{williams2006gaussian} provide a flexible prior distribution over functions and form the backbone of many BO methods. 
Formally, a GP is a collection of random variables $\{f(x) : x \in \mathcal{D}\}$ such that any finite subset follows a multivariate Gaussian distribution. 
A GP is fully specified by a mean function $m(x)$ and a positive semidefinite kernel function $k(x,x')$, denoted as
\[
f(x) \sim \mathcal{GP}\!\bigl(m(x), k(x,x')\bigr).
\]

Given noisy observations $\{(x_i, y_i)\}_{i=1}^t$ with $y_i$ generated by eq. \eqref{eq:feedback}, the posterior distribution of $f$ at a new point $x$ is Gaussian with mean and variance
\begin{align*}
\mu_t(x) &= \mathbf{k}_t(x)^\top \bigl(\mathbf{K}_t + \sigma^2 \mathbf{I}_t\bigr)^{-1} \mathbf{y}_{1:t}, \\
\sigma_t^2(x) &= k(x,x) - \mathbf{k}_t(x)^\top \bigl(\mathbf{K}_t + \sigma^2 \mathbf{I}_t\bigr)^{-1} \mathbf{k}_t(x),
\end{align*}
where $\mathbf{K}_t \in \mathbb{R}^{t \times t}$ is the kernel matrix with entries $[k(x_i, x_j)]_{i,j=1}^t$, $\mathbf{k}_t(x) = [k(x_1, x), \dots, k(x_t, x)]^\top$, and $\mathbf{y}_{1:t} = [y_1, \dots, y_t]^\top$ collects the observations up to $t$.

The GP-UCB algorithm \citep{srinivas2010gaussian,chowdhury2017kernelized} leverages this posterior to balance exploration and exploitation. 
At each round $t$, the query point is chosen as
\[
x_t = \arg\max_{x \in \mathcal{D}} \Bigl\{ \mu_{t-1}(x) + \sqrt{\beta_t}\,\sigma_{t-1}(x) \Bigr\},
\]
where $\beta_t > 0$ is a confidence parameter that grows with $t$. 
This selection rule encourages exploration of uncertain regions while exploiting points with high predicted values.  

A central theoretical result is that, under mild kernel assumptions, GP-UCB achieves sublinear cumulative regret
\begin{equation}
\label{eq:regret-GP-UCB}
R_T = \mathcal{O}\!\left(\sqrt{T \, \beta_T \, \gamma_{f,T} }\right),
\end{equation}
where $\gamma_{f,T}$ is the maximum information gain of function $f$ from $T$ observations, defined as
\begin{equation}
\label{eq:inform-gain}
\gamma_{f,T} \coloneqq \max_{A \subseteq \mathcal{D} : |A| = T} \mathbf{I}\!\left( \mathbf{y}_A ; \mathbf{f}_A \right),
\end{equation}
with $\mathbf{f}_A = [f(x)]_{x \in A}$, $\mathbf{y}_A = \mathbf{f}_A + \bm{\varepsilon}_A$, $\bm{\varepsilon}_A \sim \mathcal{N}(0, \sigma^2 \mathbf{I})$, and $\mathbf{I}(\mathbf{y}_A ; \mathbf{f}_A)$ denoting mutual information.  
This guarantee makes GP-UCB a principled and widely adopted algorithm for BO, and it serves as the foundation for our transfer-learning extension.

\subsection{Problem Setup}

We study how BO can be accelerated by leveraging historical knowledge from a related \emph{source} task. Specifically, we assume access to a dataset
\[
\mathcal{S}^{(0)} = \bigl\{ (x_i^{(0)}, y_i^{(0)}) \bigr\}_{i=1}^N,
\]
where the outputs are generated from a \emph{source} function $g:\mathcal{D} \to \mathcal{Y}$ according to
\[
y_i^{(0)} = g(x_i^{(0)}) + \varepsilon_i^{(0)}, \qquad \varepsilon_i^{(0)} \sim \mathcal{N}(0, \sigma_0^2).
\]
Our objective is to design an algorithm that incorporates $\mathcal{S}^{(0)}$ to reduce the regret incurred, thereby accelerating the optimization process when optimizing the \emph{target} function $f$.

To formalize the connection between the source and target tasks, we adopt the following assumption.
\begin{assump}[Additive model]
\label{asm:add}
The target function $f$ can be decomposed into the sum of the source function $g$ and a difference function $\delta$:
\begin{equation}
\label{eq:add-model}
f(x) = g(x) + \delta(x),
\end{equation}
where $g$ and $\delta$ are drawn \emph{independently} from two GPs:
\[
g(x) \sim \mathcal{GP}(0, k_g(x,x')), \quad 
\delta(x) \sim \mathcal{GP}(0, k_\delta(x,x')).
\]
Without loss of generality, we restrict all kernels to be uniformly bounded by $1$.
\end{assump}

Assumption~\ref{asm:add} is mild, as the only essential requirement is the independence of $g$ and $\delta$, which is natural in many applications. Compared with the model assumption used in Diff-GP~\citep{shilton2017regret}, which implicitly requires $g$ and $\delta$ to be governed by GPs with the \emph{same} kernel, our assumption permits $k_g \neq k_\delta$~\footnote{While we study knowledge transfer from only one source task, our model and assumptions can be easily extended to a setting with multiple source tasks by assuming that each source and difference function is drawn from independent GPs.}. This additional flexibility is important for modeling cases in which the source and target functions do not lie in the same Reproducing Kernel Hilbert space (RKHS)~\citep{williams2006gaussian}.  

As we will show in Section~\ref{sec:theoretical-analysis}, when the difference function $\delta$ is easier to learn than the full target function $f$, and when the source dataset $\mathcal{S}^{(0)}$ is sufficiently large, the additive structure in eq.~\eqref{eq:add-model} enables substantial gains in learning efficiency and regret performance.






\section{THE DELTABO ALGORITHM}\label{sec:alg}

We now introduce our proposed DeltaBO algorithm, which efficiently leverages the source dataset to accelerate optimization of the target function. See Figure \ref{fig:diagram} for the workflow diagram and Algorithm \ref{alg:transfer_ucb} for the complete procedure.

\begin{figure}[!htbp]
\includegraphics[width=\linewidth]{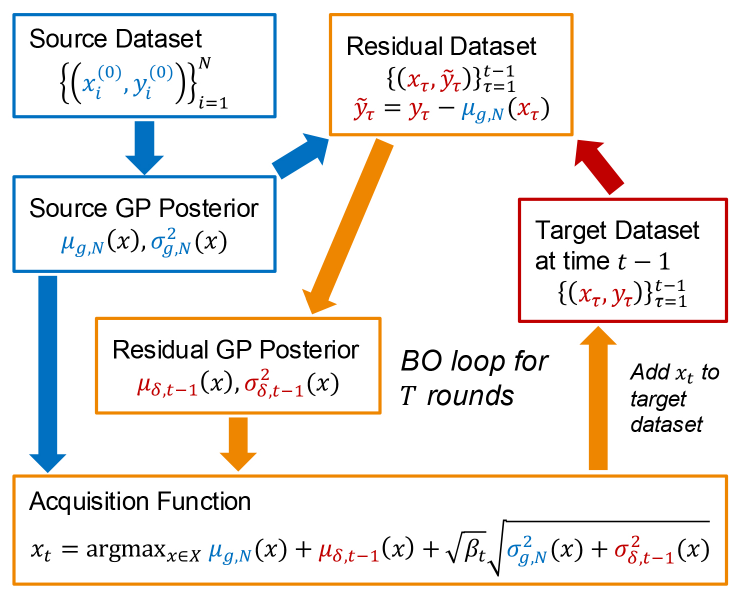}
    \caption{The DeltaBO framework. Information from the source dataset is marked in blue, information from the target task is marked in red, and the algorithmic process is marked in orange, which runs for a total of $T$ rounds.} 
    \label{fig:diagram}
\end{figure}

\begin{algorithm}[!htbp]
\caption{DeltaBO Algorithm}\label{alg:transfer_ucb}
\textbf{Inputs:} Input space $\mathcal{D}$; source posterior mean $\mu_{g,N}(x)$; source posterior variance $\sigma_{g,N}^2(x)$; difference kernel $k_\delta$; noise variance $\sigma^2$; number of iterations $T$.
\begin{algorithmic}[1]
\FOR{$t = 1,2,\dots,T$}
    \STATE Select $x_t$ by eq. \eqref{eq:x-update}.
    \STATE Query the target function $f$ by eq. \eqref{eq:feedback}.\label{algline:queryf}
    \STATE Compute the residual $\tilde{y}_t = y_t - \mu_{g,N}(x_t)$.
    \STATE Update the posterior of $\delta$ using $(x_t, \tilde{y}_t)$ with noise variance $\sigma_{g,N}^2(x_t)+\sigma^2$, obtaining $\mu_{\delta,t}(x)$ and $\sigma_{\delta,t}^2(x)$.\label{algline:deltagp}
\ENDFOR
\end{algorithmic}
\textbf{Output:}  $\hat{x} = \mathcal{U}(x_1,...,x_T)$.
\end{algorithm}

The design of DeltaBO is motivated by a key observation: under the additive structure in eq. \eqref{eq:add-model}, learning the target function $f$ reduces to learning the difference function $\delta$. Each evaluation of $f(x)$ can be interpreted as a biased observation of $\delta(x)$, with the bias given by $g(x)$. Although $g(x)$ is unknown in practice, it can be accurately approximated, together with an associated uncertainty, using the source samples. Incorporating this posterior estimate of $g(x)$ yields more accurate mean and variance estimates for $f(x)$ and ultimately leads to more efficient optimization.  

\textbf{Posterior of the source function.}
We first compute the posterior distribution of the source function $g$ given the dataset $\mathcal{S}^{(0)}$.  
Let $\mathbf{K}_{g,N}$ denote the kernel matrix with entries $\bigl[k_g(x_i^{(0)}, x_j^{(0)})\bigr]_{1 \leq i,j \leq N}$, and define the kernel vector $\mathbf{k}_{g,N}(x) = [k_g(x_i^{(0)}, x)]_{i=1}^N$.  
Then the posterior mean and variance of $g$ at any $x \in \mathcal{D}$ are
\begin{equation}
\label{eq:posterior-mean-var}
\begin{aligned}
\mu_{g,N}(x) &= \mathbf{k}_{g,N}(x)^\top \bigl(\mathbf{K}_{g,N} + \sigma_0^2 \mathbf{I}\bigr)^{-1} \mathbf{y}^{(0)}, \\
\sigma^2_{g,N}(x) &= k_g(x,x) - \mathbf{k}_{g,N}(x)^\top \bigl(\mathbf{K}_{g,N} + \sigma_0^2 \mathbf{I}\bigr)^{-1} \mathbf{k}_{g,N}(x),
\end{aligned}
\end{equation}
where $\mathbf{y}^{(0)} = [y^{(0)}_1, \dots, y^{(0)}_N]^\top$ collects the source observations~\citep{srinivas2010gaussian}.  

\textbf{Residual observations for $\delta$.}
Next, we observe the target outputs $y_t = f(x_t) + \varepsilon_t$ sequentially for $t = 1,\dots,T$, with $\varepsilon_t \sim \mathcal{N}(0, \sigma^2)$, as shown in Algorithm~\ref{alg:transfer_ucb} line~\ref{algline:queryf}. Since $f(x_t) = g(x_t) + \delta(x_t)$ and the posterior of $g$ is fixed after time~0, we define the residual
\begin{equation*}
\widetilde{y}_t := y_t - \mu_{g,N}(x_t).
\end{equation*}
Let $\nu_t := g(x_t) - \mu_{g,N}(x_t)$ and $\eta_t := \nu_t + \varepsilon_t$. Then
\begin{equation*}
\widetilde{y}_t = \delta(x_t) + \eta_t.
\end{equation*}
Since $\nu_t \sim \mathcal{N}(0, \sigma^2_{g,N}(x_t))$, we have
\[
\eta_t \sim \mathcal{N}\!\left(0, \, \sigma^2_{g,N}(x_t) + \sigma^2\right).
\]
Thus, $\widetilde{y}_t$ provides an unbiased but noisy observation of $\delta(x_t)$, with variance inflated by the uncertainty of $g$. We may therefore treat $\{(x_i, \widetilde{y}_i)\}_{i=1}^{t-1}$ as effective observations of $\delta$ when constructing its posterior and making new evaluations at time $t$, as in line~\ref{algline:deltagp} of Algorithm~\ref{alg:transfer_ucb}. 

\textbf{Posterior of the difference function.}
Let $\mathbf{K}_{\delta,t-1}$ be the kernel matrix over $x_1,\dots,x_{t-1}$ with entries $\bigl[k_\delta(x_i, x_j)\bigr]_{1 \leq i,j \leq t-1}$, and define $\mathbf{k}_{\delta,t-1}(x) = [k_\delta(x_i, x)]_{i=1}^{t-1}$.  
Then, by standard GP regression, the posterior mean and variance of $\delta$ at any $x \in \mathcal{D}$ are
\begin{equation}
\label{eq:delta-posterior-mean-var}
\begin{aligned}
& \mu_{\delta,t-1}(x) =\\
& \mathbf{k}_{\delta,t-1}(x)^\top 
   \Bigl(\mathbf{K}_{\delta,t-1} 
   + \bigl(\sigma^2_{g,N}(x) + \sigma^2\bigr)\mathbf{I}_{t-1}\Bigr)^{-1} 
   \widetilde{\mathbf{y}}_{1:t-1}, \\
& \sigma^2_{\delta,t-1}(x) = k_\delta(x, x) -\\
& \mathbf{k}_{\delta,t-1}(x)^\top
   \Bigl(\mathbf{K}_{\delta,t-1} 
   + \bigl(\sigma^2_{g,N}(x) + \sigma^2\bigr)\mathbf{I}_{t-1}\Bigr)^{-1} 
   \mathbf{k}_{\delta,t-1}(x).
\end{aligned}
\end{equation}
where $\widetilde{\mathbf{y}}_{1:t-1} = [\widetilde{y}_1,\dots,\widetilde{y}_{t-1}]^\top$ collects the residual observations.  

\textbf{Acquisition rule.}
The posterior mean and variance in eq. \eqref{eq:delta-posterior-mean-var} provide point estimates and uncertainty quantification for $\delta(x)$. Combining these with the posterior of $g$ in eq. \eqref{eq:posterior-mean-var} and using Assumption~\ref{asm:add}, we obtain point and variance estimates for $f(x)$. Defining $\sigma_{\delta,0}^2(x) = \sigma^2_{g,N}(x) + \sigma^2$ and $\mu_{\delta,0}(x) \equiv 0$ for all $x \in \mathcal{D}$, the GP-UCB acquisition rule becomes
\begin{equation}
\label{eq:x-update}
\begin{aligned}
x_t \in & \arg \max_{x \in \mathcal{D}} \Bigl\{ 
\mu_{g,N}(x) + \mu_{\delta,t-1}(x) 
\\
&\qquad\qquad + \sqrt{\beta_t}\, \sqrt{\sigma^2_{g,N}(x) + \sigma^2_{\delta,t-1}(x)} 
\Bigr\},
\end{aligned}
\end{equation}
where $\mu_{g,N}(x) + \mu_{\delta,t-1}(x)$ serves as the estimate of $f(x)$ and $\sigma^2_{g,N}(x) + \sigma^2_{\delta,t-1}(x)$ as its predictive variance. Here $\beta_t > 0$ is a confidence parameter that grows with $t$.

Finally, we aggregate all observations and produce $\hat{x}$ by drawing uniformly from the set $\{x_{1}, \ldots, x_{T}\}$ after $T$ iterations. Therefore, the output $\hat{x}$ satisfies $f^* - \mathbb{E}[f(\hat{x})] \leq R_T/T$, which is also known as the expected simple regret upper bound. In practice, one can also choose either the last query point $\hat{x}=x_T$, or the query point that returns maximum observation value $\hat{x}=\arg\max_{t=1,...,T} \{y_1,...,y_T\}$, as the output.

\section{THEORETICAL ANALYSIS}\label{sec:theoretical-analysis}

In this section, we establish formal guarantees for the proposed DeltaBO algorithm. Our analysis begins with a regret bound (Theorem~\ref{thm:regret-bound}), which characterizes how the cumulative regret of DeltaBO depends on the number of target evaluations $T$, the number of available source samples $N$, and the information gains associated with both the difference function $\delta$ and the source function $g$. We then investigate conditions under which the information gain $\gamma_{\delta,T}$ is significantly smaller than that of the target function $\gamma_{f,T}$, thereby explaining the advantage of accelerated convergence of DeltaBO using knowledge transfer, relative to standard BO. Together, these results provide theoretical justification for the efficiency and robustness of our approach.

\subsection{Regret Analysis}

We first establish the main regret bound for DeltaBO, showing that leveraging a large source dataset and explicitly modeling the difference function $\delta$ can substantially reduce cumulative regret relative to standard BO. To facilitate the analysis, we introduce the notation
\[
\tau^2 \coloneqq \sup_{x \in \mathcal{D}} k_{\delta}(x,x),
\]
which bounds the variance of the difference kernel on the decision set. 
Recall that $\gamma_{g,N}$ and $\gamma_{\delta,T}$ denote the information gains of $g$ (from $N$ observations) and $\delta$ (from $T$ observations), respectively, as defined in \eqref{eq:inform-gain}. 

For clarity of exposition, we present the analysis for the case in which the decision set $\mathcal{D}$ is finite, and later discuss possible extensions to infinite decision sets.

\begin{theorem}[Cumulative regret bound of DeltaBO]
\label{thm:regret-bound}
Let $\rho \in (0,1)$ denote the error tolerance probability. 
Assume that the decision set $\mathcal{D}$ is finite with cardinality 
$\vert \mathcal{D} \vert$, and that the source dataset 
$\mathcal{S}^{(0)}$ contains $N$ observations of $g$. 
Consider running DeltaBO with
\[
\beta_t = 2\log\!\left(\frac{ \vert \mathcal{D} \vert \, t^2 \pi^2}{6\rho}\right) 
\quad \text{for all } t \geq 1.
\] 
Then, under Assumption~\ref{asm:add}, with probability at least $1-\rho$, 
for all $T \geq 1$, the cumulative regret satisfies
\begin{equation}
\label{eq:regret-bound-precise}
\begin{aligned}
R_T \;\leq\; \Biggl\{ 8T \beta_T & \Biggl( 
\frac{T \gamma_{g,N} \sigma_0^2}{N - 2\gamma_{g,N}} \\
& + C_2 \gamma_{\delta,T} 
   \Bigl(\frac{2 \gamma_{g,N}}{N - 2 \gamma_{g,N}} \sigma_0^2 + \sigma^2\Bigr) 
\Biggr) \Biggr\}^{\!1/2},
\end{aligned}
\end{equation}
where
\[
C_2 = \frac{\tau^{2}/\sigma^{2}}{\log\!\left(1 + \tau^{2}/\sigma^{2}\right)} 
\;\leq\; 1 + \frac{\tau^2}{\sigma^2},
\]
since $x/\log(1+x) \leq 1+x$ for all $x \geq 0$.
\end{theorem}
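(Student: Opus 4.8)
The plan is to adapt the classical GP-UCB regret template of \citet{srinivas2010gaussian} to the decomposed model $f=g+\delta$, with the source posterior frozen after the $N$ source observations. Concretely, I would carry out four steps: (i) a high-probability confidence bound for $f$ built from the two independent posteriors; (ii) a per-round regret bound; (iii) a Cauchy--Schwarz reduction to the sum of predictive variances; and (iv) an information-gain bound on that sum, split into a source part and a difference part.

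For step (i), I would fix $x$ and $t$ and write $f(x)-\mu_{g,N}(x)-\mu_{\delta,t-1}(x) = (g(x)-\mu_{g,N}(x)) + (\delta(x)-\mu_{\delta,t-1}(x))$. Using Assumption~\ref{asm:add} (independence of $g$ and $\delta$) together with the fact, established in Section~\ref{sec:alg}, that each residual is an unbiased observation of $\delta(x_t)$ with noise $\eta_t\sim\mathcal N(0,\sigma^2_{g,N}(x_t)+\sigma^2)$, I would argue that this difference is Gaussian with variance $\sigma^2_{g,N}(x)+\sigma^2_{\delta,t-1}(x)$, so that a Gaussian tail bound plus a union bound over the finite $\mathcal D$ and over $t$ (absorbed by the $\pi^2/6$ and $|\mathcal D|t^2$ factors in $\beta_t$) yields $|f(x)-\mu_{g,N}(x)-\mu_{\delta,t-1}(x)|\le\sqrt{\beta_t}\sqrt{\sigma^2_{g,N}(x)+\sigma^2_{\delta,t-1}(x)}$ simultaneously for all $x,t$ with probability $\ge 1-\rho$. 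Steps (ii)--(iii) are then standard: the acquisition rule \eqref{eq:x-update} and the confidence bound give $r_t = f(x_*)-f(x_t)\le 2\sqrt{\beta_t}\sqrt{\sigma^2_{g,N}(x_t)+\sigma^2_{\delta,t-1}(x_t)}$, and Cauchy--Schwarz with $\beta_t\le\beta_T$ gives $R_T^2\le 4T\beta_T\sum_{t=1}^T\big(\sigma^2_{g,N}(x_t)+\sigma^2_{\delta,t-1}(x_t)\big)$.

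The work is in step (iv), where the two variance sums are controlled separately. For the difference part $\sum_t\sigma^2_{\delta,t-1}(x_t)$ I would use the information-gain identity for the $\delta$-GP, noting that its noise is heteroscedastic with level $\sigma^2_{g,N}(x_t)+\sigma^2$; bounding this level uniformly and applying the elementary inequality $u\le C_2\log(1+u)$ with $u=\sigma^2_{\delta,t-1}(x_t)/(\sigma^2_{g,N}(x_t)+\sigma^2)$ converts the telescoping log-determinant into $\gamma_{\delta,T}$ and produces the factor $C_2\big(\tfrac{2\gamma_{g,N}}{N-2\gamma_{g,N}}\sigma_0^2+\sigma^2\big)$. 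For the source part, the key is a \emph{uniform} bound $\sigma^2_{g,N}(x)\le \tfrac{2\gamma_{g,N}\sigma_0^2}{N-2\gamma_{g,N}}$ for every $x\in\mathcal D$; since the source posterior is frozen, summing it over the $T$ queries gives the $\tfrac{T\gamma_{g,N}\sigma_0^2}{N-2\gamma_{g,N}}$ term. To prove this uniform bound I would use the Schur-complement form of the posterior update to show that the source variance sequence $\{\sigma^2_{g,n}(x)\}_n$ is monotonically non-increasing in $n$, which lets me relate the per-step increments of the source log-determinant to $\sigma^2_{g,N}(x)/(\sigma_0^2+\sigma^2_{g,N}(x))$ and thereby compare $N\cdot\tfrac{\sigma^2_{g,N}(x)}{\sigma_0^2+\sigma^2_{g,N}(x)}$ with $2\gamma_{g,N}$, rearranging into the claimed bound (the positivity of $N-2\gamma_{g,N}$ being exactly the requirement that the source dataset is informative enough).

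I expect the main obstacle to be twofold and concentrated in this source part. First, the monotonicity-plus-information-gain argument must deliver a bound on $\sigma^2_{g,N}$ that is \emph{uniform over} $x$, rather than only along the realized source sequence, so care is needed to compare the pre-observation variances at the source points with the worst-case posterior variance on $\mathcal D$. Second, the confidence bound in step (i) is delicate because the $\delta$-posterior treats the source errors $\nu_t=g(x_t)-\mu_{g,N}(x_t)$ as independent heteroscedastic noise, whereas they are in fact correlated across $t$ through the single draw of $g$; justifying that the frozen-source predictive variance $\sigma^2_{g,N}(x)+\sigma^2_{\delta,t-1}(x)$ still dominates the true conditional fluctuation of $f(x)$ is where the independence assumption and the inflated-noise construction of Section~\ref{sec:alg} must be used most carefully.
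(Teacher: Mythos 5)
Your proposal follows essentially the same route as the paper's proof: the identical four-step GP-UCB template, with the confidence bound built from the sum of the two posterior variances via a Gaussian tail plus union bound over $x$ and $t$ (Lemma~\ref{lemma:confidence-bound}), the standard per-round regret and Cauchy--Schwarz reduction (Lemma~\ref{lem:regret-bound}), the Schur-complement monotonicity of the source posterior variance yielding the uniform bound $\sigma^2_{g,N}(x)\le 2\gamma_{g,N}\sigma_0^2/(N-2\gamma_{g,N})$ (Lemmas~\ref{lemma:variance-decreasing} and~\ref{lem:variance-sum-bound}), and the inequality $u\le C_2\log(1+u)$ converting the heteroscedastic $\delta$-variance sum into $C_2\gamma_{\delta,T}$ times the inflated noise level. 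The two obstacles you flag (uniformity of the source-variance bound over $x$, and the correlation of the residual errors through the single draw of $g$) are exactly the delicate points of the paper's own argument, so there is no substantive difference in approach to report.
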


The proof is deferred to Appendix~\ref{sec:proof-thm-regret-bound}. 
Theorem~\ref{thm:regret-bound} can also be extended to the case in which $\mathcal{D}$ is infinite by applying the standard discretization argument of \citet{srinivas2010gaussian}. We additionally need to assume that $\mathcal{D} \subseteq \mathbb{R}^d$ is compact and convex, and that the GP prior satisfies a high-probability bound on the magnitudes of its partial derivatives \citep{srinivas2010gaussian}. These assumptions are standard requirements in continuous GP bandit analysis. The key idea is to construct a sequence of time-varying finite discretizations that approximate $\mathcal{D}$ with increasing precision, and to show that the additional discretization error is negligible when summed over all rounds. This yields the same order of regret bound as in the finite case, up to a logarithmic adjustment in $\beta_t$ to account for the size of the discretization.

To make the implications of this result more transparent, we state the following corollary, which simplifies the bound in eq.~\eqref{eq:regret-bound-precise} and highlights its asymptotic behavior.

\begin{corollary}
\label{corollary:asymptotic-bd}
Assume $\gamma_{g,N} = o(N)$, $\gamma_{\delta,T} = \mathcal{O}(T)$, and 
$\tau^2 = \mathcal{O}(\sigma^2)$. If
\[
\frac{\gamma_{g,N}}{N} = \mathcal{O}\!\left(\frac{\gamma_{\delta,T}}{T}\right),
\]
then
\begin{equation}
\label{eq:regret-bd-deltabo-corollary}
R_T = \mathcal{O}\!\left( \bigl(\sigma^2 + \sigma_0^2\bigr)^{1/2} 
   \sqrt{\,T \beta_T \gamma_{\delta,T}} \,\right).
\end{equation}
\end{corollary}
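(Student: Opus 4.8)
The plan is to substitute the stated asymptotic assumptions directly into the precise bound \eqref{eq:regret-bound-precise} and track how each summand inside the braces scales, showing that the leading contribution is of order $(\sigma^2+\sigma_0^2)\gamma_{\delta,T}$. Since the bound is already an explicit inequality, the corollary reduces to asymptotic bookkeeping on its right-hand side.

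First I would control the denominators. Because $\gamma_{g,N}=o(N)$, for all sufficiently large $N$ we have $2\gamma_{g,N}\le N/2$, so $N-2\gamma_{g,N}\ge N/2$ and hence $\frac{1}{N-2\gamma_{g,N}}=\mathcal{O}(1/N)$. This turns every occurrence of $\frac{\gamma_{g,N}}{N-2\gamma_{g,N}}$ into $\mathcal{O}(\gamma_{g,N}/N)$, which by the ratio hypothesis is $\mathcal{O}(\gamma_{\delta,T}/T)$.

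Next I would bound the two summands separately. For the first,
\[
\frac{T\gamma_{g,N}\sigma_0^2}{N-2\gamma_{g,N}}
= \mathcal{O}\!\left(T\sigma_0^2\,\frac{\gamma_{g,N}}{N}\right)
= \mathcal{O}\!\left(T\sigma_0^2\cdot\frac{\gamma_{\delta,T}}{T}\right)
= \mathcal{O}\!\left(\sigma_0^2\,\gamma_{\delta,T}\right),
\]
where the crucial cancellation is that the spurious factor of $T$ is absorbed by the ratio condition. For the second summand, $\tau^2=\mathcal{O}(\sigma^2)$ gives $\tau^2/\sigma^2=\mathcal{O}(1)$, so $C_2\le 1+\tau^2/\sigma^2=\mathcal{O}(1)$; meanwhile $\frac{2\gamma_{g,N}}{N-2\gamma_{g,N}}\sigma_0^2=o(\sigma_0^2)$, so the parenthetical factor is $\mathcal{O}(\sigma^2+\sigma_0^2)$, yielding $C_2\gamma_{\delta,T}\bigl(\cdots\bigr)=\mathcal{O}\!\left((\sigma^2+\sigma_0^2)\gamma_{\delta,T}\right)$. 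Adding the two contributions, the bracketed expression is $\mathcal{O}\!\left((\sigma^2+\sigma_0^2)\gamma_{\delta,T}\right)$, and taking the square root together with the prefactor $\sqrt{8T\beta_T}$ reproduces exactly \eqref{eq:regret-bd-deltabo-corollary}.

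The main obstacle, such as it is, lies entirely in the first summand: it carries an extra factor of $T$ relative to the analogous term in standard GP-UCB, and without further control it would inflate the regret by a factor of $\sqrt{T}$. It is precisely the ratio condition $\gamma_{g,N}/N=\mathcal{O}(\gamma_{\delta,T}/T)$ that neutralizes this extra $T$ and keeps the source-induced variance contribution no larger than the difference-function term. I would therefore emphasize in the write-up that this condition marks exactly the threshold at which a large source dataset ($N\gg T$) delivers acceleration, since it is the step that determines whether the cross term dominates; the remaining manipulations are routine order estimates.
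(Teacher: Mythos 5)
Your proposal is correct and follows essentially the same route as the paper's proof: substitute the asymptotic hypotheses into eq.~\eqref{eq:regret-bound-precise}, use $\gamma_{g,N}=o(N)$ to reduce $\tfrac{\gamma_{g,N}}{N-2\gamma_{g,N}}$ to $\mathcal{O}(\gamma_{g,N}/N)$, let the ratio condition absorb the extra factor of $T$ in the first summand, and use $\tau^2=\mathcal{O}(\sigma^2)$ to make $C_2=\mathcal{O}(1)$. The only cosmetic difference is that you bound the parenthetical term $\tfrac{2\gamma_{g,N}}{N-2\gamma_{g,N}}\sigma_0^2$ directly via $\gamma_{g,N}=o(N)$, whereas the paper routes this through $\gamma_{\delta,T}=\mathcal{O}(T)$; both are valid.
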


\begin{remark}
Recall that the regret bound (eq. \eqref{eq:regret-GP-UCB}) of GP-UCB uses only target data and is of order $\tilde{\mathcal{O}}( \sqrt{ T \gamma_{f,T} } )$. Comparing this with eq. \eqref{eq:regret-bd-deltabo-corollary}, we see that whenever $\gamma_{\delta,T} \ll \gamma_{f,T}$, the regret bound of DeltaBO is substantially smaller than that of the standard GP-UCB algorithm without knowledge transfer. Conversely, negative transfer can occur when $\gamma_{\delta,T}$ is comparable to or larger than $\gamma_{f,T}$. In this regime, the regret bound reverts to the standard GP-UCB rate and, in practice, may even degrade if the prior over-trusts a badly mismatched source.
\end{remark}

The rate at which information gain grows with the number of observations reflects the intrinsic difficulty of learning the function~\citep{williams2006gaussian} and is closely tied to the choice of kernel. Thus, requiring $\gamma_{\delta,T} \ll \gamma_{f,T}$ essentially amounts to assuming that the difference function is easier to learn than the full target function. This condition is natural in practice: it corresponds to the source and target tasks being closely related, which is precisely the regime in which transfer learning is expected to yield the greatest benefit. 

In the next section, we further investigate how information gain grows with the number of iterations for different kernels, and we characterize more precisely when the condition $\gamma_{\delta,T} \ll \gamma_{f,T}$ holds.

\subsection{Information Gain Bounds}
\label{sec:info-gain-bounds}

The regret guarantees in Theorem~\ref{thm:regret-bound} and Corollary~\ref{corollary:asymptotic-bd} depend on the information gain $\gamma_{\delta,T}$ of the difference function $\delta$. Understanding how $\gamma_{\delta,T}$ scales with $T$ is therefore essential. Two factors can make $\gamma_{\delta,T}$ significantly smaller than $\gamma_{f,T}$:  
(i) the spectral decay of the kernel governing $\delta$, which reflects its smoothness or effective complexity; and  
(ii) the amplitude $\tau^2 := \sup_{x \in \mathcal{D}} k_\delta(x,x)$, which rescales the eigenvalues of $k_\delta$ and determines the overall magnitude of the information gain.

We first establish bounds on $\gamma_{\delta,T}$ for three commonly used kernel classes.

\begin{proposition}[Information gain with amplitude scaling]
\label{prop:gamma-rates-tau}
Let $\mathcal{D} \subseteq \mathbb{R}^d$ be compact and $k_\delta(x,x') = \tau^2 \bar{k}_\delta(x,x')$ with $\sup_{x\in \mathcal{D}}\bar{k}_\delta(x,x)\le 1$. Denote by $\gamma_{\delta,T}$ the maximum information gain of $\delta$ after $T$ evaluations under Gaussian noise of variance $\sigma^2$. Then, for absolute constants $C_1,C_2>0$ depending only on $d$ and $\sigma^2$, we have:
\begin{itemize}
\item[\textnormal{(a)}] \textbf{Linear kernel.} If $\bar{k}_\delta(x,x') = x^\top x'$, with effective dimension $d$, then
\[
\gamma_{\delta,T} \;\le\; C_1 \,\tau^2\, d \log\!\big(eT\big) \;+\; C_2 \log\!\big(1+\tau^2\big).
\]
\item[\textnormal{(b)}] \textbf{Squared Exponential (SE) kernel.} 
If $\bar{k}_\delta$ is the squared exponential kernel on $\mathcal{D} \subset \mathbb{R}^d$, i.e.
\[
\bar{k}_\delta(x,x') \;=\; \exp\!\left(-\frac{\|x-x'\|^2}{2\ell^2}\right),
\]
with length-scale parameter $\ell>0$, then
\[
\gamma_{\delta,T} \;\le\; C_1 \,\tau^2\, (\log T)^{d+1} \;+\; C_2 \log\!\big(1+\tau^2\big).
\]
\item[\textnormal{(c)}] \textbf{Matérn kernel.} If $\bar{k}_\delta$ is the Matérn kernel with smoothness parameter $\nu>1$, then
\[
\gamma_{\delta,T} \;\le\; C_1 \,\tau^2 \, T^{\frac{d(d+1)}{2\nu+d(d+1)}} \log T \;+\; C_2 \log\!\big(1+\tau^2\big).
\]
\end{itemize}
\end{proposition}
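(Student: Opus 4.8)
The plan is to work directly from the definition of information gain. For any finite $A\subseteq\mathcal{D}$ with $|A|=T$, the Gaussian mutual information equals $\tfrac12\log\det(\mathbf{I}+\sigma^{-2}\mathbf{K}_A)$, where $\mathbf{K}_A=\tau^2\bar{\mathbf{K}}_A$ is the kernel matrix of $k_\delta=\tau^2\bar{k}_\delta$ over $A$. Writing $\lambda_1\ge\cdots\ge\lambda_T\ge0$ for the eigenvalues of $\bar{\mathbf{K}}_A$, this is $\tfrac12\sum_{i=1}^T\log(1+\sigma^{-2}\tau^2\lambda_i)$, and the normalization $\sup_x\bar{k}_\delta(x,x)\le1$ yields the trace constraint $\sum_i\lambda_i=\tr(\bar{\mathbf{K}}_A)\le T$. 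The whole difficulty is to pull the amplitude $\tau^2$ out as a prefactor on the $T$-dependent rate while collecting the residual amplitude dependence into a single $\log(1+\tau^2)$ term. The two elementary facts I would lean on are the submultiplicativity bound $\log(1+uv)\le\log(1+u)+\log(1+v)$ for $u,v\ge0$ (since $(1+u)(1+v)\ge1+uv$), and the Bernoulli-type bound $\log(1+\tau^2u)\le\tau^2\log(1+u)$ valid whenever $\tau^2\ge1$ (since $(1+u)^{\tau^2}\ge1+\tau^2u$).

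Next I would import the standard spectral-truncation estimate used to prove the classical GP-UCB information-gain rates \citep{srinivas2010gaussian}, obtained via Mercer's theorem. Letting $\{\hat\lambda_m\}_{m\ge1}$ be the operator eigenvalues of $\bar{k}_\delta$, for any truncation level $D\ge1$ one has a head-plus-tail estimate of the form $\tfrac12\sum_i\log(1+\sigma^{-2}\tau^2\lambda_i)\le\tfrac12 D\log\!\big(1+\sigma^{-2}\tau^2 T\hat\lambda_1\big)+\tfrac12\sigma^{-2}\tau^2 T\sum_{m>D}\hat\lambda_m$, where the tail follows from $\log(1+x)\le x$ and therefore carries $\tau^2$ \emph{linearly}. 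Plugging in the eigendecays for the three classes — finite rank $D=d$ for the linear kernel, $\hat\lambda_m\lesssim e^{-cm^{1/d}}$ for SE, and $\hat\lambda_m\lesssim m^{-(2\nu+d)/d}$ for Matérn — and optimizing $D$ exactly as in the amplitude-free analysis recovers the rates $d\log(eT)$, $(\log T)^{d+1}$, and $T^{d(d+1)/(2\nu+d(d+1))}\log T$, now multiplied by $\tau^2$. This produces the $C_1\tau^2(\cdot)$ term in each of (a)–(c).

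The remaining piece is the head term $\tfrac12 D\log(1+\sigma^{-2}\tau^2 T\hat\lambda_1)$. I would split it via submultiplicativity as $\tfrac12 D\log(1+\tau^2)+\tfrac12 D\log(1+\sigma^{-2}T\hat\lambda_1)$; the second summand is $O(D\log T)$ and folds into the rate term above, while the first is $\tfrac12 D\log(1+\tau^2)$. For the linear kernel $D=d$ is constant, giving exactly the advertised $C_2\log(1+\tau^2)$ with $C_2=d/2$. For SE and Matérn $D$ grows with $T$, so I would absorb $D\log(1+\tau^2)$ into the rate term using $\log(1+\tau^2)\le\tau^2$ together with $D=O(\bar\gamma_T)$, where $\bar\gamma_T$ is the unit-amplitude information gain, leaving a clean $C_2\log(1+\tau^2)$ residual. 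The finite-$\mathcal{D}$ assumption is not essential for this statement: for compact $\mathcal{D}$ the same Mercer decomposition applies directly, so no discretization step is needed.

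I expect the main obstacle to be controlling the coupling between the amplitude $\tau^2$ and the spectral truncation \emph{uniformly} in $\tau^2$. Concretely, the delicate points are isolating a single $T$-independent $\log(1+\tau^2)$ factor from the head when the effective dimension $D$ itself grows with $T$ (for SE and Matérn), and checking that the Bernoulli step, which requires $\tau^2\ge1$, and the small-amplitude regime $\tau^2<1$ can both be routed through the same head/tail decomposition without degrading the claimed rate — the small-$\tau^2$ regime is precisely where the two-term form is most strained, since the genuine head behavior scales like $\log(\tau^2 T)$ rather than $\tau^2\log T$. Once this separation is established, pinning down the absolute constants $C_1,C_2$ and their dependence on $d$ and $\sigma^2$ is routine bookkeeping.
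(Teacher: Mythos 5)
Your plan is the same route the paper takes: reduce $\gamma_{\delta,T}$ to $\tfrac12\sum_i\log\bigl(1+\sigma^{-2}\tau^2\bar{\lambda}_i\bigr)$ and apply the head/tail eigenvalue truncation of \citet{srinivas2010gaussian}, with the tail carrying $\tau^2$ linearly via $\log(1+x)\le x$ and the head producing the $\log(1+\tau^2)$ residual. Your handling of the tail and of the three eigendecay regimes is correct and matches the paper's Steps 1--2.

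The gap is precisely the one you flag in your closing paragraph, and it is not routine bookkeeping --- as planned, the argument does not close. After splitting the head with $\log(1+\tau^2 u)\le\log(1+\tau^2)+\log(1+u)$ you are left with a piece of order $D\log\bigl(1+\sigma^{-2}T\hat{\lambda}_1\bigr)=O(D\log T)$ carrying \emph{no} factor of $\tau^2$, which you propose to ``fold into'' the rate term $C_1\tau^2\cdot(\text{rate})$. That folding requires $\tau^2$ to be bounded below by a positive constant, which contradicts $C_1$ depending only on $d$ and $\sigma^2$. In fact the two-term form itself fails for small amplitude: for the linear kernel with $d=1$ and all $T$ queries at a point with $\|x\|=1$, the information gain equals $\tfrac12\log\bigl(1+\sigma^{-2}\tau^2 T\bigr)$, which is $\Theta(\log T)$ whenever $\tau^2 T\to\infty$, whereas $C_1\tau^2\log(eT)+C_2\log(1+\tau^2)$ is $o(\log T)$ for, say, $\tau^2=T^{-1/2}$ (and for any fixed $\tau^2<1$ the required $C_1$ scales like $1/\tau^2$). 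Equivalently, the inequality $\log(1+\tau^2 u)\le\tau^2\log(1+u)+\log(1+\tau^2)$, which is what the head would need, is false for $\tau^2<1$ and large $u$; your Bernoulli step genuinely requires $\tau^2\ge1$ and the small-amplitude case cannot be routed through the same decomposition. To be fair, the paper's own proof is a sketch that asserts the $\log(1+\tau^2)$ term ``appears from bounding the first few eigenvalues'' without confronting this, so your proposal is at parity with it; a rigorous version must either restrict to $\tau^2\ge1$, let the constants depend on a lower bound for $\tau^2$, or replace the head contribution by an additive term of order $D\,\log\bigl(1+\sigma^{-2}\tau^2 T\bigr)$ without the $\tau^2$ prefactor.
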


\begin{remark}[Two drivers of reduced information gain]
\label{rem:two-drivers}
Proposition~\ref{prop:gamma-rates-tau} highlights two complementary mechanisms for keeping $\gamma_{\delta,T}$ small:
\begin{itemize}
\item \textbf{Spectral decay.} If $\delta$ is smoother or of lower effective complexity than the target $f$, the eigenvalues of $\bar{k}_\delta$ decay faster. For example, SE kernels yield $\gamma_{\delta,T}=\mathcal{O}((\log T)^{d+1})$, which grows much more slowly than the polynomial rates of Matérn kernels. Thus, smoothness of $\delta$ ensures that $\gamma_{\delta,T}$ increases slowly with $T$, directly tightening regret bounds.
\item \textbf{Small amplitude $\tau$.} The variance scale $\tau^2$ acts as a multiplicative factor on the leading terms of $\gamma_{\delta,T}$. A small amplitude therefore reduces the overall scale of the information gain, especially in moderate-sample regimes, and further accelerates convergence.
\end{itemize}
In practice, DeltaBO benefits from both effects: spectral decay governs the \emph{growth rate} of $\gamma_{\delta,T}$, while amplitude determines its overall \emph{scale}.
\end{remark}

Substituting the bounds of Proposition~\ref{prop:gamma-rates-tau} into Corollary~\ref{corollary:asymptotic-bd} shows that whenever the difference function $\delta$ is smoother (with faster spectral decay) and/or has small amplitude, the resulting regret bound of DeltaBO is substantially tighter than that of target-only GP-UCB.

\subsection{Theory-Driven Guidelines for Practice}

Our theoretical analysis shows that the cumulative regret of DeltaBO is of order $\tilde{\mathcal{O}}(\sqrt{T(T/N+\gamma_\delta)})$, which suggests several guidelines for applying DeltaBO in practice. First, unlike \citet{shilton2017regret}, our regret bound depends explicitly on $N$, which encourages users to collect more observations from a related source task to improve the performance of DeltaBO. Since our goal is to accelerate the optimization process in $T$, $T$ is typically small, so $N \gg T$ can often be satisfied easily. Second, the dependence on $\gamma_\delta$ reflects the difficulty of the knowledge-transfer problem, as it is determined by the difference function $\delta$. Consequently, DeltaBO performs best when the target task is closely aligned with the source tasks. See our experiments (Section \ref{sec:exp}) for practical results. Finally, while we focus on the accelerated BO regime, collecting more observations from the target task (i.e., increasing $T$) always helps the optimization process since DeltaBO is a no-regret algorithm, i.e., $\lim_{N,T\rightarrow \infty} R_T/T=0$.

\section{EXPERIMENTS}\label{sec:exp}

\subsection{Experimental Settings}

\textbf{Baselines.} We compare DeltaBO with six BO algorithms: GP-EI \citep{jones1998efficient}, GP-PI \citep{kushner1964differential}, GP-TS \citep{thompson1933likelihood}, GP-UCB \citep{srinivas2010gaussian}, Env-GP, and Diff-GP. The first four algorithms are classical BO methods without knowledge transfer, while the last two focus on BO with knowledge transfer and were proposed in \citet{shilton2017regret}. \footnote{We compared against Env-GP and Diff-GP because they are the only GP-UCB-based BO methods with knowledge transfer that come with provable guarantees.}

\textbf{Evaluations.} We summarize performance using two metrics, cumulative regret $R_T$ and average cumulative regret $R_T/T$. For both metrics, lower is better, and we report 95\% confidence intervals, calculated as $\pm 1.96 \cdot \nu / \sqrt{n}$, where $\nu$ denotes the empirical standard deviation of the metrics across $n$ replications. We set $n=100$ for the real-world experiments and $n=30$ in the synthetic setting. Due to the page limit, more details on the experimental settings and results for average regret are deferred to Appendix~\ref{sec:exp-detail}.

\subsection{Real-World AutoML Experiments}
To illustrate the effectiveness of DeltaBO in a real-world setting, we conduct hyperparameter tuning for classification tasks on the UCI Breast Cancer dataset \citep{DuaGraff2017} following \citet{liu2023global}. The black-box objective is defined as the mapping from hyperparameter configurations to validation accuracy, with dimensionality equal to the number of hyperparameters in each classification model. We use two models: Gradient Boosting (GBoost) with 11 hyperparameters, and Multi-Layer Perceptron (MLP) with 8 hyperparameters.

We construct the source and target datasets by first designating 60\% of the data as a shared portion. The remaining 40\% is split evenly between the two domains, so that each dataset contains 80\% of the overall samples. The source dataset contains $N = 90$ observations, and the optimization is run for $T=30$ iterations, starting with 6 initial observations.

For modeling, we use a Matérn kernel for both the source and target functions. Since the source and target tasks are expected to be similar, we model the difference function with a squared exponential kernel, using an appropriate lengthscale to enforce smoothness. This choice provides flexibility in capturing task discrepancies while making the difference function easier to learn.

\begin{figure*}[!htbp]  
\centering
\begin{subfigure}{0.32\textwidth}
    \includegraphics[width=\linewidth]{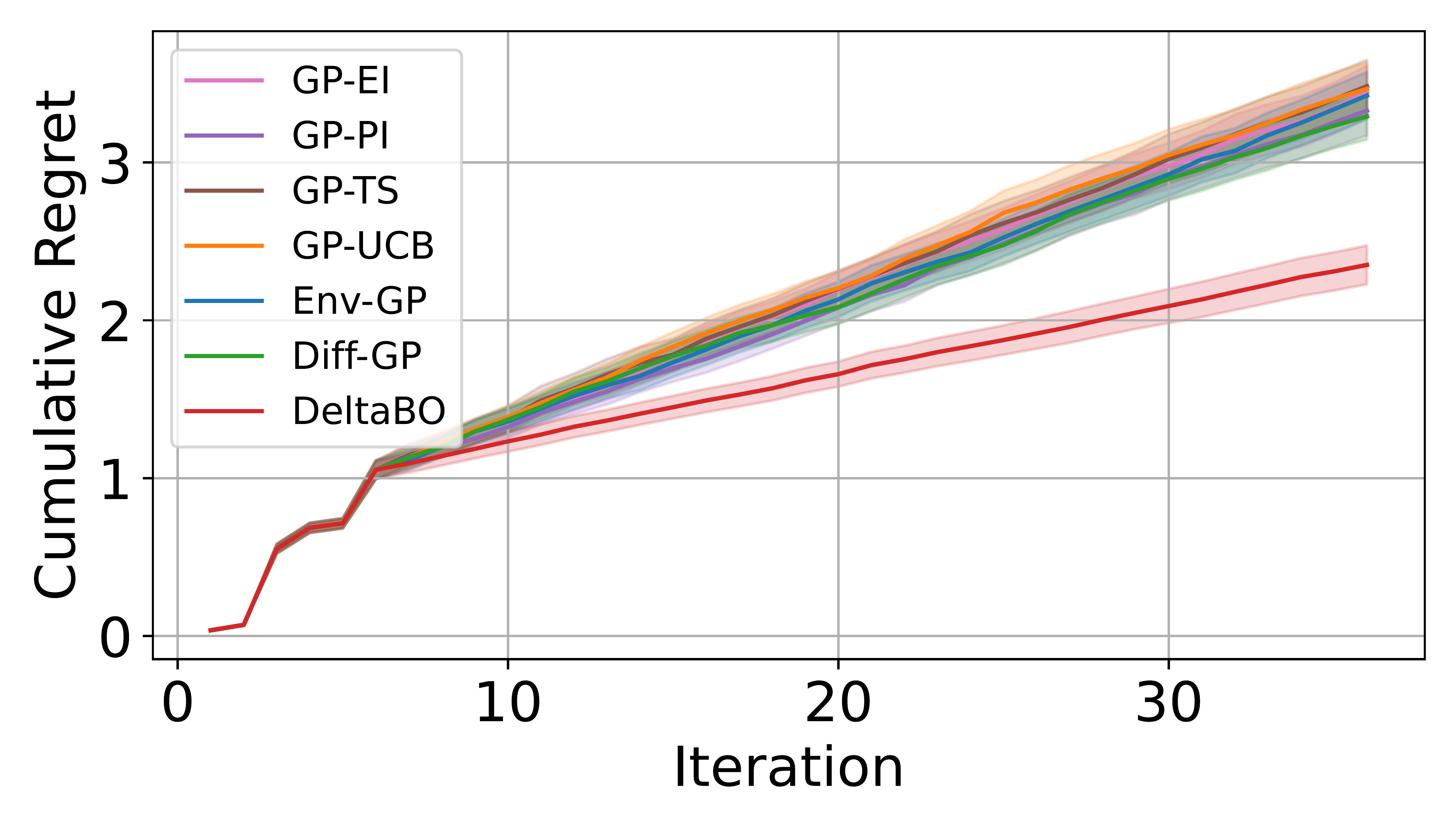}
    \caption{AutoML on GBoost}
    \label{fig:rw-XGB}
\end{subfigure}
\begin{subfigure}{0.32\textwidth}
    \includegraphics[width=\linewidth]{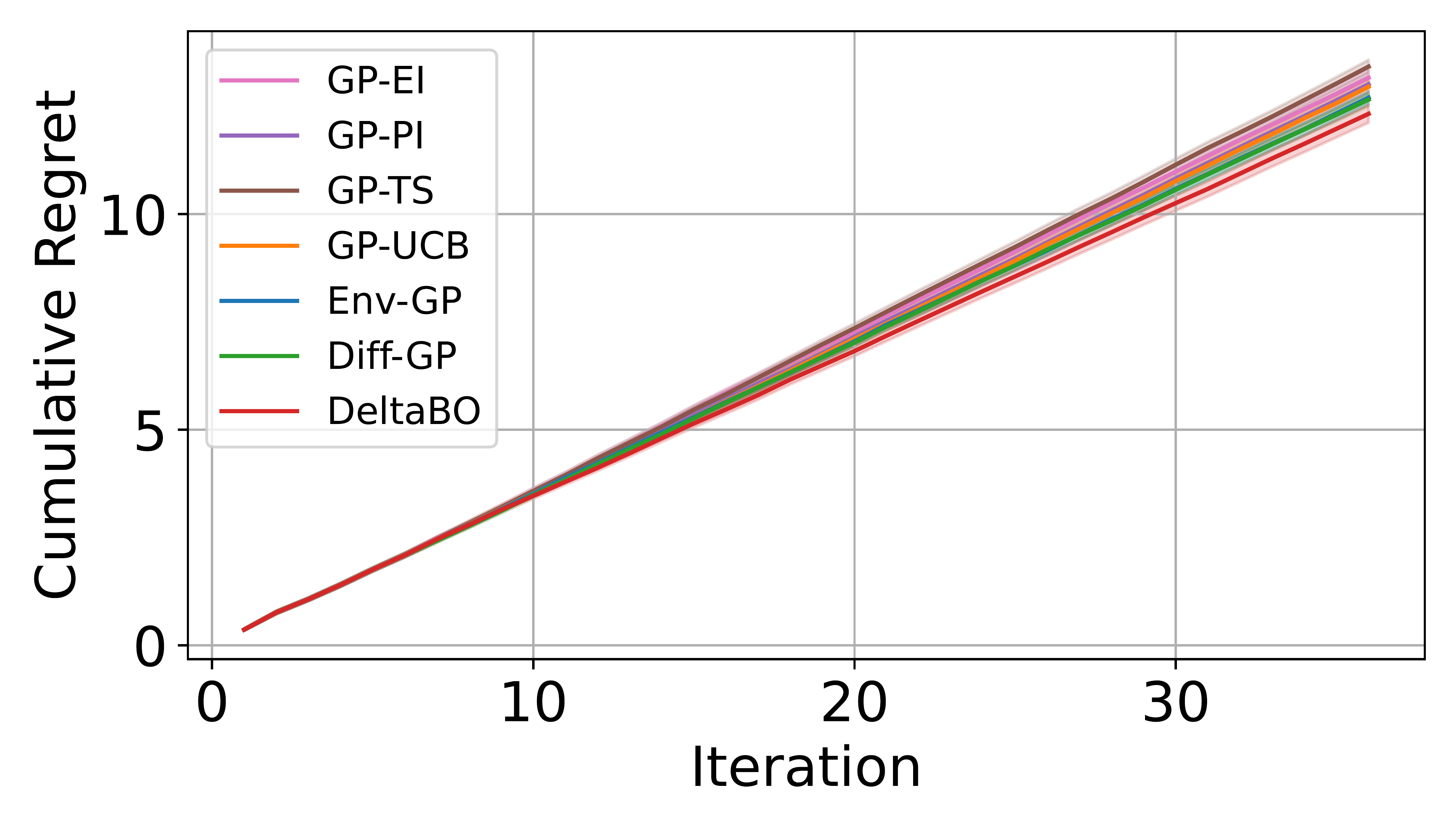}
    \caption{AutoML on MLP}
    \label{fig:rw-mlp}
\end{subfigure}
\begin{subfigure}{0.32\textwidth}
    \includegraphics[width=\linewidth]{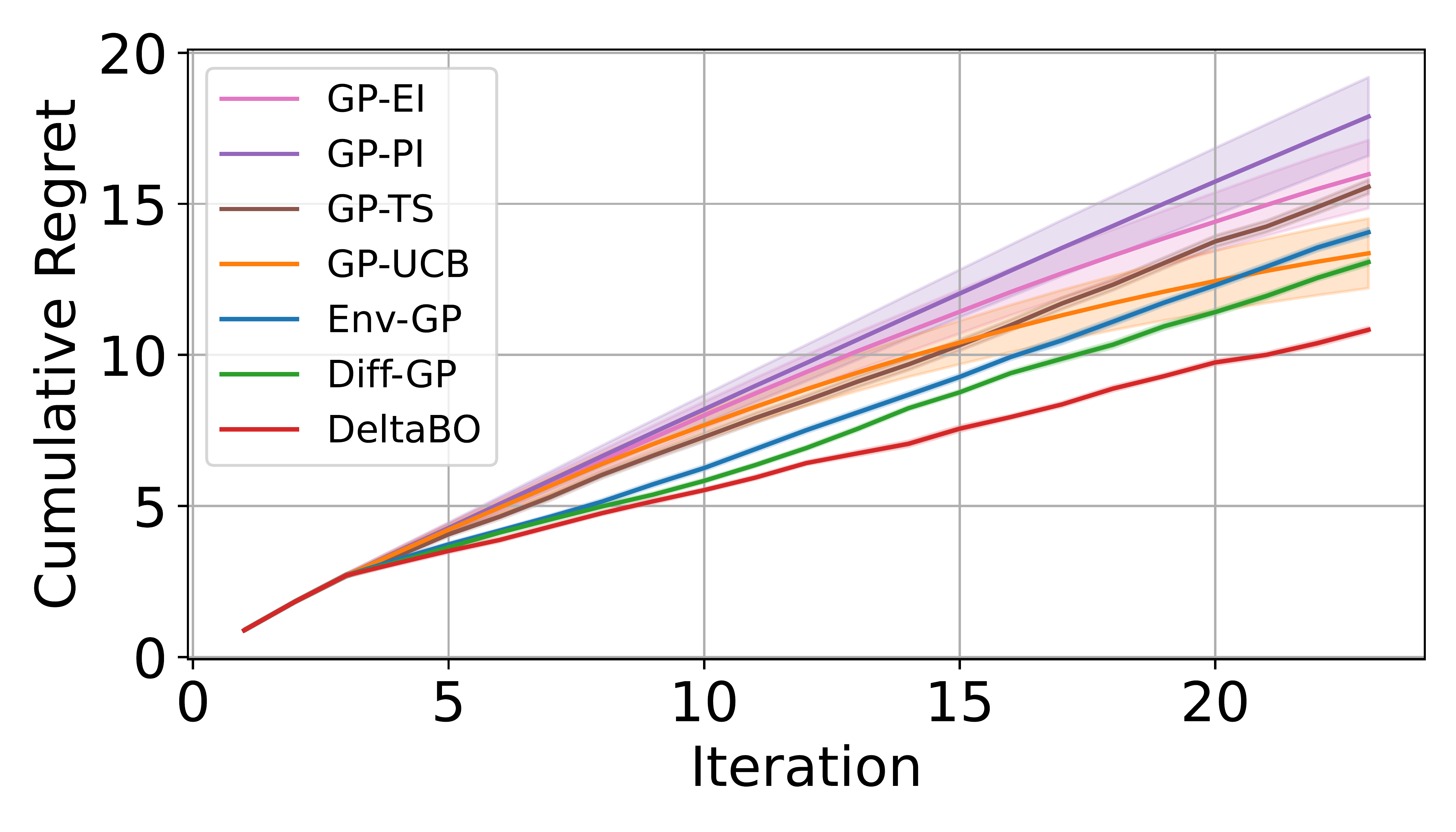}
    \caption{Synthetic Gaussain kernel}
    \label{fig:rw-gau}
\end{subfigure}
\begin{subfigure}{0.32\textwidth}
    \includegraphics[width=\linewidth]{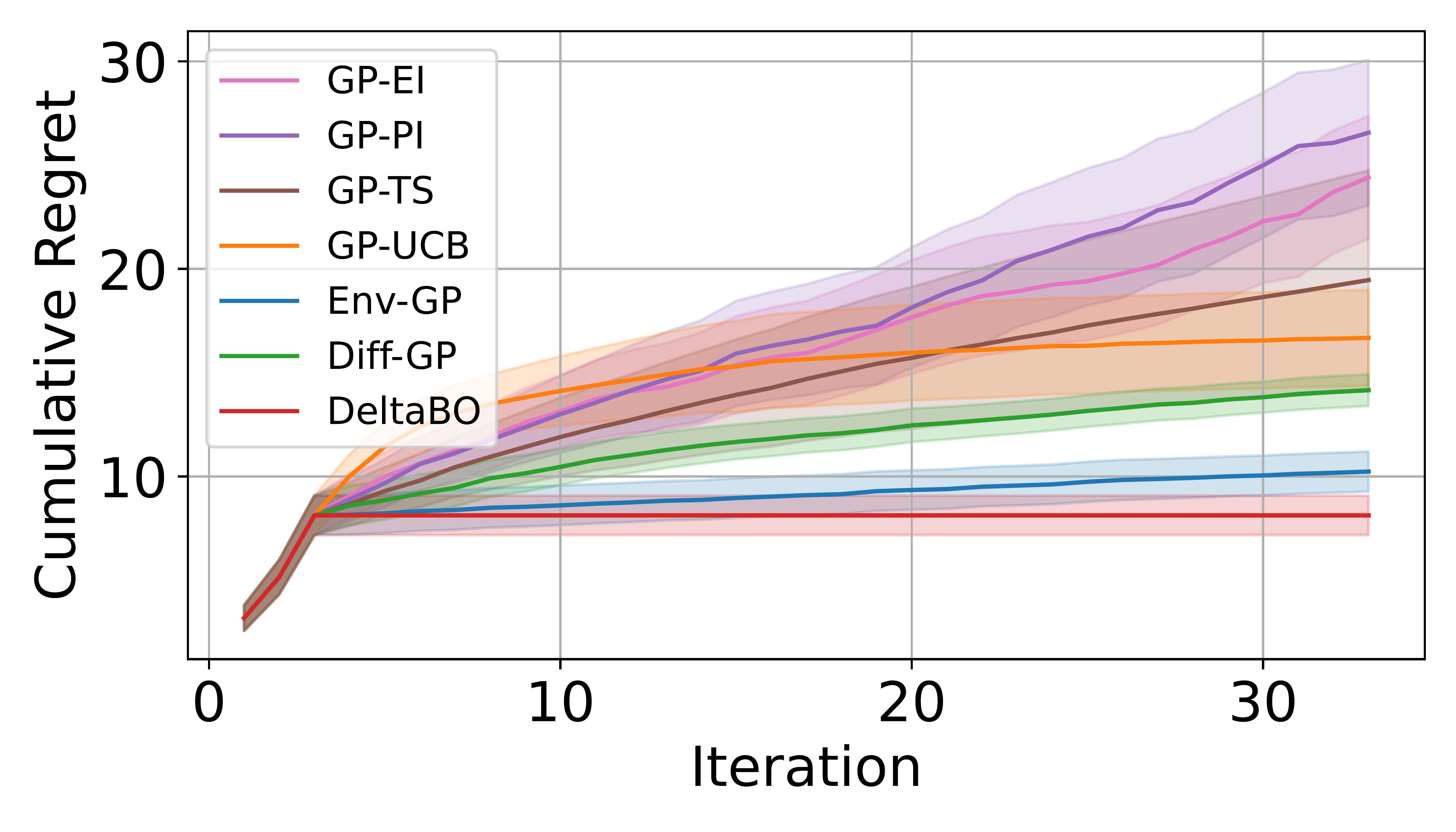}
    \caption{Synthetic Bohachevsky}
    \label{fig:rw-boh}
\end{subfigure}
\begin{subfigure}{0.32\textwidth}
    \includegraphics[width=\linewidth]{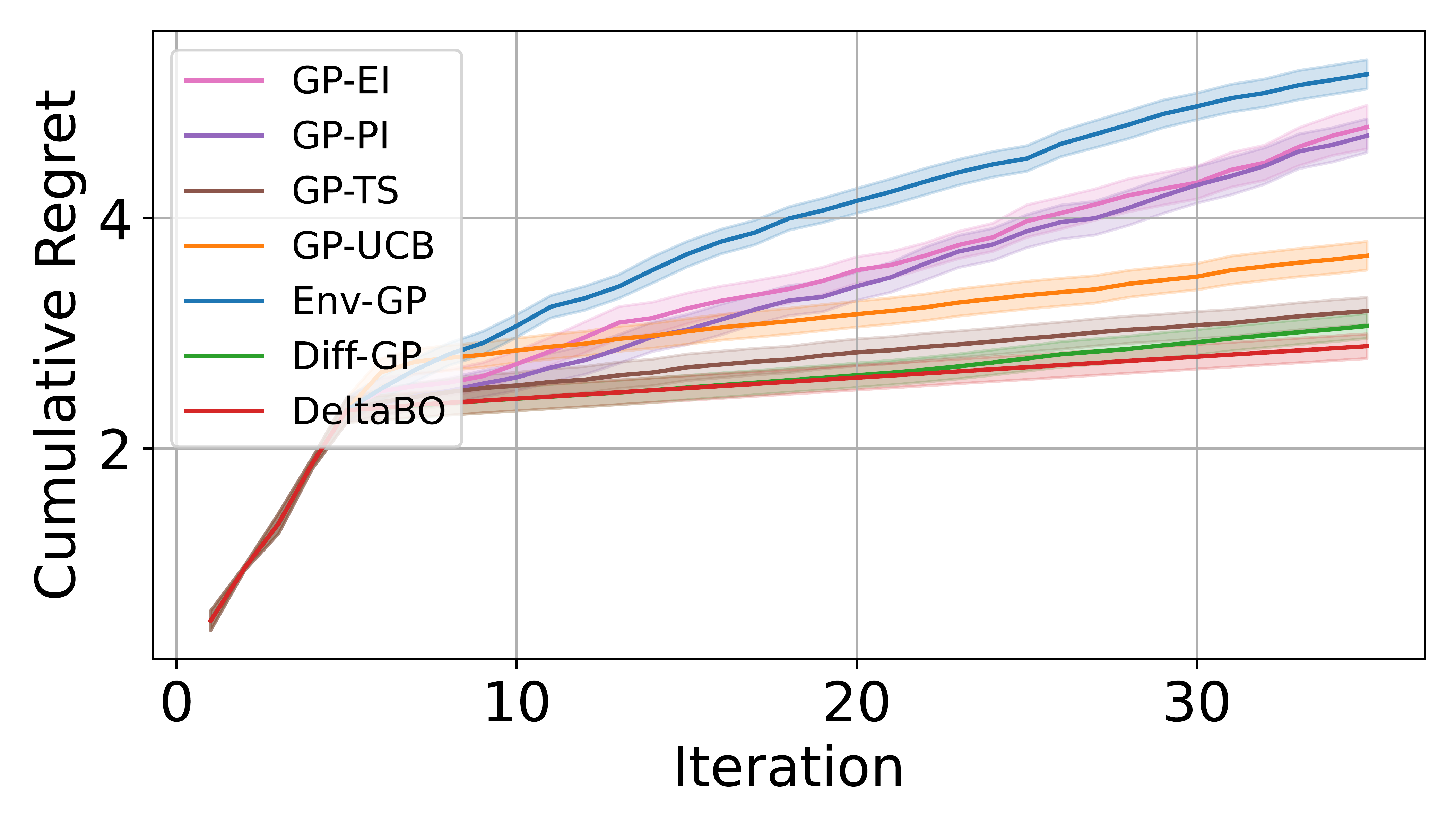}
    \caption{Assumptions-satisfied setting}
    \label{fig:rw-assumption}
\end{subfigure}

\caption{Cumulative regrets of all compared algorithms.}
\end{figure*}

Figure \ref{fig:rw-XGB} shows that DeltaBO significantly outperforms all other baselines by achieving lower cumulative regret. In Figure \ref{fig:rw-mlp}, DeltaBO establishes a consistent advantage after the initial iterations, with only slight overlap in the error bars with Env-GP and Diff-GP, which themselves exhibit nearly identical behavior. This result highlights DeltaBO’s advantage in employing an independent GP with a kernel distinct from those of the source and target objectives, thereby enabling lower regret through appropriate modeling of the difference function.

\subsection{Synthetic Experiments}

\textbf{Gaussian kernel functions.} In the first synthetic experiment, we follow the setup of \citet{shilton2017regret}. The source and target functions are defined as 
\[
g(\mathbf{x}) = \exp\!\left(-\tfrac{1}{2}\lVert \mathbf{x} - \mu' \mathbf{1}\rVert^2\right), 
f(\mathbf{x}) = \exp\!\left(-\tfrac{1}{2}\lVert \mathbf{x} - \mu \mathbf{1}\rVert^2\right) 
\] 
where $\mu' = \mu + \tfrac{s}{\sqrt{n}}$ and $s$ specifies the shift factor between the source and target. We use the squared exponential kernel for all functions and set the dimensionality to 2, as in the original setting. In this case, we focus on a shift of $s = 1$, which corresponds to a more substantial difference between the source and target, and we increase $N$ to 400 in order to better evaluate the algorithms under richer source information.

From Figure~\ref{fig:rw-gau}, we can see that Env-GP and Diff-GP outperform the classical algorithms, but neither method is comparable to DeltaBO. In contrast, DeltaBO achieves significantly lower regret than all other algorithms and maintains small error bars. This demonstrates that DeltaBO benefits the most from the increasing number of source observations, which is consistent with our theoretical conclusion in eq.~\eqref{eq:regret-bound-precise}. Intuitively, Env-GP and Diff-GP are also expected to benefit from additional source data; however, no theoretical guarantee has been established regarding the relationship between increasing source data and decreasing regret for these methods.

\textbf{Bohachevsky functions.} Next, we choose our source and target functions from the Bohachevsky family:
\[
g(x) = x_1^2 + 2 x_2^2 - 0.3 \cos(3 \pi x_1) - 0.4 \cos(4 \pi x_2) + 0.7,
\]
\[
f(x) = x_1^2 + 2 x_2^2 - 0.3 \cos(3 \pi x_1) \cdot \cos(4 \pi x_2) + 0.3,
\]
both defined over $[-2,2]^2$ and discretized into a uniform grid of $120 \times 120$ points.
The source dataset has $N = 400$ samples, while the target evaluation is run for $T = 30$ iterations. We select a Matérn kernel for our target $f$ and difference function $\delta$, and a squared exponential kernel for our source function $g$. 

As shown in Figure~\ref{fig:rw-boh}, DeltaBO outperforms all other algorithms. Env-GP achieves the best performance among the baselines and remains close to DeltaBO in the early iterations, with both methods reaching low regret quickly. As the number of iterations increases, however, DeltaBO sustains a clear advantage: its regret growth slows substantially and eventually levels off, meaning that it has found the global optimum. This behavior highlights DeltaBO’s ability to leverage prior knowledge from $g$, thereby reducing the need for further exploration. Again, DeltaBO achieves lower regret than Diff-GP by allowing independent kernels for each function, which enables DeltaBO to model diverse scenarios, whereas Diff-GP is limited to using the same kernel across functions.

\textbf{Assumption-satisfied setting.} Finally, we study the performance of DeltaBO in a synthetic setting where Assumption \ref{asm:add} is well satisfied. The function domain is set to $[-1,1]^2$, discretized on a uniform $120 \times 120$ grid. We first generate source samples from a Matérn kernel with lengthscale 1.2, and difference samples $\delta$ from a squared exponential kernel with lengthscale 1.0. The target samples are then obtained by eq. \eqref{eq:add-model}. Also, the bound for the difference kernel is set to $\tau^2 = 0.8$ to satisfy Assumption \ref{asm:add}.

From Figure~\ref{fig:rw-assumption}, we observe that Diff-GP achieves performance comparable to DeltaBO in the early iterations. This is consistent with our assumption, since Diff-GP also models the difference between the source and target tasks, making it similar to our additive setting. However, as the number of iterations increases, Diff-GP fails to sustain the same low regret as DeltaBO. We attribute this minor gap to misspecification of the difference kernel, as Diff-GP constrains the kernel of $\delta$ to be the same as those of the source and target, while DeltaBO allows the use of the true kernel with which we generated $\delta$, namely the squared exponential kernel.

Our experiments demonstrate consistent improvements over standard BO baselines in the transfer-learning settings studied here. Extending DeltaBO to substantially larger datasets and integrating it with more heavily engineered BO systems would require additional effort, and we view this as a natural direction for future work.

\section{CONCLUSION}
While BO has been successfully applied to many critical real-world applications, evaluating even a single iteration often remains time-consuming and costly, which severely limits its broader success. To accelerate the optimization process, in this paper we systematically study how BO can be accelerated on a target task by transferring historical knowledge from a related source task. While there are existing works in this area, they either do not come with theoretical guarantees or achieve the same regret bound as BO in the non-transfer setting, failing to show the advantage of access to historical knowledge. To address this problem, we propose the DeltaBO algorithm, in which a novel uncertainty-quantification approach is built on the difference function $\delta$ between the source and target functions, which are allowed to belong to different RKHSs. Under mild assumptions, we prove that the regret of DeltaBO is of order $\tilde{\mathcal{O}}(\sqrt{T(T/N+\gamma_\delta)})$, where typically $N \gg T$ and $\gamma_\delta \ll \gamma_f$ when source and target tasks are similar. Empirical studies on both real-world and synthetic tasks show that DeltaBO outperforms all other baseline methods. Possible future directions include proving a regret lower bound for the DeltaBO algorithm.

\subsubsection*{Acknowledgements}
This work was partially supported by the IBM-UAlbany CEAIS Seed Grant 1201104-1-102522. The authors would like to thank anonymous reviewers and the area chair for helpful comments that improve this paper, and the University of Chicago Data Science Institute Cluster for computing resources.

\bibliography{ref}

\clearpage
\section*{Checklist}





\begin{enumerate}

  \item For all models and algorithms presented, check if you include:
  \begin{enumerate}
    \item A clear description of the mathematical setting, assumptions, algorithm, and/or model. [Yes]
    We provide a clear description of the mathematical setting and assumptions in Sections \ref{sec:pre} and \ref{sec:alg}, where we formalize the additive model (Assumption \ref{asm:add}) and introduce the DeltaBO algorithm, complete with posterior derivations and acquisition rules.
    \item An analysis of the properties and complexity (time, space, sample size) of any algorithm. [Yes]
    In Section \ref{sec:theoretical-analysis}, we analyze the theoretical properties of DeltaBO, including regret bounds that characterize its performance relative to baseline BO algorithms.
    \item (Optional) Source code, with specification of all dependencies, including external libraries. [Yes]
    We provide source code with full dependencies in a Github link.
  \end{enumerate}

  \item For any theoretical claim, check if you include:
  \begin{enumerate}
    \item Statements of the full set of assumptions of all theoretical results. [Yes]
    We state all theoretical assumptions explicitly, most notably Assumption \ref{asm:add}, and the theoretical results about regret analysis is in Section \ref{sec:theoretical-analysis}. 
    \item Complete proofs of all theoretical results. [Yes]
    Complete proofs are provided in the supplementary material (Appendix \ref{sec:proof-thm-regret-bound} and \ref{app:add_proof}).
    \item Clear explanations of any assumptions. [Yes]
    The rationale and implications of our assumptions are explained in Section \ref{sec:pre} and further clarified in Section \ref{sec:theoretical-analysis}, which discusses their role in ensuring sublinear regret.
  \end{enumerate}

  \item For all figures and tables that present empirical results, check if you include:
  \begin{enumerate}
    \item The code, data, and instructions needed to reproduce the main experimental results (either in the supplemental material or as a URL). [Yes]
    We include code, data sources, and instructions to enable full reproducibility of the experiments.
    \item All the training details (e.g., data splits, hyperparameters, how they were chosen). [Yes]
    All training details, including data splits, hyperparameter choices, and kernel settings, are reported in Section \ref{sec:exp} and Appendix \ref{sec:exp-detail}.
    \item A clear definition of the specific measure or statistics and error bars (e.g., with respect to the random seed after running experiments multiple times). [Yes] 
    We clearly define the evaluation metrics and specify how error bars are constructed at the start of Section \ref{sec:exp}.
    \item A description of the computing infrastructure used. (e.g., type of GPUs, internal cluster, or cloud provider). [Yes]
    Details of the computing infrastructure are shown in the appendix.
  \end{enumerate}

  \item If you are using existing assets (e.g., code, data, models) or curating/releasing new assets, check if you include:
  \begin{enumerate}
    \item Citations of the creator If your work uses existing assets. [Yes] We cite all datasets used, including the UCI Breast Cancer dataset, and provide references for benchmark functions used in synthetic experiments.
    \item The license information of the assets, if applicable. [Not Applicable]
    \item New assets either in the supplemental material or as a URL, if applicable. [Not Applicable]
    \item Information about consent from data providers/curators. [Not Applicable]
    \item Discussion of sensible content if applicable, e.g., personally identifiable information or offensive content. [Not Applicable]
  \end{enumerate}

  \item If you used crowdsourcing or conducted research with human subjects, check if you include:
  \begin{enumerate}
    \item The full text of instructions given to participants and screenshots. [Not Applicable]
    \item Descriptions of potential participant risks, with links to Institutional Review Board (IRB) approvals if applicable. [Not Applicable]
    \item The estimated hourly wage paid to participants and the total amount spent on participant compensation. [Not Applicable]
  \end{enumerate}

\end{enumerate}

\newpage
\appendix
\thispagestyle{empty}

\onecolumn
\aistatstitle{Provable Accelerated Bayesian Optimization with Knowledge Transfer: Supplementary Materials}

\section{Proof for Theorem~\ref{thm:regret-bound}}\label{sec:proof-thm-regret-bound}

In this section, we prove Theorem~\ref{thm:regret-bound}. We begin with several 
auxiliary lemmas and then proceed to the main proof of the theorem.

\subsection{Auxiliary Lemmas}

In this section, we present some auxiliary lemmas that will be useful for proving the main theorem.

\begin{lemma}
\label{lemma:confidence-bound}
Fix $\rho \in (0,1)$. By setting
\[
\beta_t \;=\; 2 \log\!\left( \frac{|\mathcal{D}| \pi^2 t^2}{6\rho} \right),
\]
we have, with probability at least $1-\rho$,
\[
\big| f(x) - \mu_{g,N}(x) - \mu_{\delta,t-1}(x) \big|
\;\leq\; \sqrt{\beta_t}\,\sqrt{\sigma^2_{g,N}(x) + \sigma^2_{\delta,t-1}(x)}
\]
for all $t \in \mathbb{N}^+$ and $x \in \mathcal{D}$.
\end{lemma}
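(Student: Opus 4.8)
The plan is to reduce the uniform statement to a pointwise Gaussian tail bound for each fixed pair $(x,t)$ and then close it with a union bound over the finite set $\mathcal{D}$ and the rounds $t \in \mathbb{N}^+$. Concretely, I would show that, under the appropriate conditioning, the residual $f(x) - \mu_{g,N}(x) - \mu_{\delta,t-1}(x)$ is a mean-zero Gaussian with variance exactly $\sigma^2_{g,N}(x) + \sigma^2_{\delta,t-1}(x)$. Granting this, the standard tail inequality $\Pr[\lvert Z\rvert > c] \le e^{-c^2/2}$ for $Z \sim \mathcal{N}(0,1)$ gives, for each $(x,t)$,
\[
\Pr\!\Big[\big\lvert f(x) - \mu_{g,N}(x) - \mu_{\delta,t-1}(x)\big\rvert > \sqrt{\beta_t}\,\sqrt{\sigma^2_{g,N}(x) + \sigma^2_{\delta,t-1}(x)}\Big] \le e^{-\beta_t/2}.
\]
With $\beta_t = 2\log\!\big(\lvert\mathcal{D}\rvert \pi^2 t^2/(6\rho)\big)$ this failure probability equals $6\rho/(\lvert\mathcal{D}\rvert \pi^2 t^2)$; summing over $x \in \mathcal{D}$ and over $t$ and using $\sum_{t\ge 1} t^{-2} = \pi^2/6$, the total failure probability is at most $\lvert\mathcal{D}\rvert \cdot (\pi^2/6) \cdot 6\rho/(\lvert\mathcal{D}\rvert \pi^2) = \rho$, which yields the lemma.

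The heart of the argument is the distributional claim, which I would establish through the decomposition
\[
f(x) - \mu_{g,N}(x) - \mu_{\delta,t-1}(x) = \big(g(x) - \mu_{g,N}(x)\big) + \big(\delta(x) - \mu_{\delta,t-1}(x)\big),
\]
valid by Assumption~\ref{asm:add}. Conditioning on the source dataset $\mathcal{S}^{(0)}$, the first summand is precisely the GP posterior deviation of $g$ at $x$, hence distributed as $\mathcal{N}(0, \sigma^2_{g,N}(x))$ by eq.~\eqref{eq:posterior-mean-var}. For the second summand I would invoke the working model underlying DeltaBO: the residuals $\widetilde{y}_i = \delta(x_i) + \eta_i$ are treated as noisy evaluations of $\delta$ with independent effective noise $\eta_i \sim \mathcal{N}(0, \sigma^2_{g,N}(x_i) + \sigma^2)$. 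Under this model, standard GP regression with kernel $k_\delta$ makes $\delta(x) - \mu_{\delta,t-1}(x)$ a mean-zero Gaussian with variance $\sigma^2_{\delta,t-1}(x)$ conditional on the residuals, exactly as in eq.~\eqref{eq:delta-posterior-mean-var}. Since $g$ and $\delta$ are drawn from \emph{independent} GPs, the test-point deviation $g(x) - \mu_{g,N}(x)$ is independent of the $\delta$-posterior, so the two summands are independent mean-zero Gaussians whose variances add to $\sigma^2_{g,N}(x) + \sigma^2_{\delta,t-1}(x)$.

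The main obstacle is the joint Gaussianity and the additivity of the variances. The residuals that define $\mu_{\delta,t-1}$ contain the terms $\nu_i = g(x_i) - \mu_{g,N}(x_i)$, so the $\delta$-posterior does carry information about $g$, and one must argue that this coupling does not interact with the test-point deviation $g(x) - \mu_{g,N}(x)$ in a way that inflates the combined variance beyond $\sigma^2_{g,N}(x) + \sigma^2_{\delta,t-1}(x)$. I would make this precise by working throughout under the algorithm's noise model, in which the $\eta_i$ are posited as independent heteroscedastic effective noise that absorbs $\nu_i$, so that $g$ enters the confidence statement only through its test-point posterior at $x$, cleanly independent of $\delta$ by Assumption~\ref{asm:add}. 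Verifying that this is the correct variance proxy — equivalently, that the separable form $\sigma^2_{g,N}(x) + \sigma^2_{\delta,t-1}(x)$ is justified by the independence of the two GPs together with the effective-noise treatment of $\eta_i$ — is the step demanding the most care; once it is in place, the Gaussian tail bound and the $\sum_t t^{-2}$ union bound complete the proof.
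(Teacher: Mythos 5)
Your proposal is correct and follows essentially the same route as the paper: decompose the deviation into $\bigl(g(x)-\mu_{g,N}(x)\bigr)+\bigl(\delta(x)-\mu_{\delta,t-1}(x)\bigr)$, use independence of the two GPs to add the variances, apply the Gaussian tail bound, and close with a union bound over $\mathcal{D}$ and over $t$ via the $\sum_t 6/(\pi^2 t^2)=1$ allocation. The coupling issue you flag --- that the residuals defining $\mu_{\delta,t-1}$ contain the correlated terms $\nu_i = g(x_i)-\mu_{g,N}(x_i)$, so the separable variance proxy rests on the algorithm's independent effective-noise model rather than on the exact joint posterior --- is a real subtlety, but the paper's own proof handles it exactly as you propose, by working under that working model and asserting additivity directly from $g \independent \delta$.
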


\begin{proof}
For any $t \geq 1$ and $x \in \mathcal{D}$, recall that $f(x) = g(x) + \delta(x)$. Define
\[
\tilde{y}_t \;=\; f(x_t) - \mu_{g,N}(x_t) \;=\; \delta(x_t) + \nu_t,
\]
where $\nu_t = g(x_t) - \mu_{g,N}(x_t) \sim \mathcal{N}(0, \sigma^2_{g,N}(x_t))$.  
Conditioned on past observations, we have
\[
\delta(x_t) \sim \mathcal{N}(\mu_{\delta,t-1}(x_t), \sigma^2_{\delta,t-1}(x_t)).
\]
Since $g \perp\!\!\!\perp \delta$, it follows that
\[
f(x_t) - \mu_{g,N}(x_t) - \mu_{\delta,t-1}(x_t)
\;\sim\; \mathcal{N}\!\big(0, \sigma^2_{g,N}(x_t) + \sigma^2_{\delta,t-1}(x_t)\big).
\]

Let
\[
\sigma_t^2(x) \;\coloneqq\; \sigma^2_{g,N}(x) + \sigma^2_{\delta,t-1}(x).
\]
By the Gaussian tail bound,
\[
\Pr\!\left( \,\big| f(x) - \mu_{g,N}(x) - \mu_{\delta,t-1}(x) \big| > \sqrt{\beta_t}\,\sigma_t(x) \right)
\;\leq\; \exp\!\left(-\tfrac{\beta_t}{2}\right).
\]

Applying the union bound over all $x \in \mathcal{D}$ gives
\[
\Pr\!\left( \exists\, x \in \mathcal{D}:\,
\big| f(x) - \mu_{g,N}(x) - \mu_{\delta,t-1}(x) \big| > \sqrt{\beta_t}\,\sigma_t(x) \right)
\;\leq\; |\mathcal{D}| \cdot \exp\!\left(-\tfrac{\beta_t}{2}\right).
\]

To make the guarantee uniform over all $t \in \mathbb{N}^+$, we distribute the total failure probability $\rho$ across time steps.  
Assign the failure probability at time $t$ to be $\tfrac{6\rho}{\pi^2 t^2}$, noting that
\[
\sum_{t=1}^\infty \frac{6}{\pi^2 t^2} = 1.
\]
Thus, it suffices to choose $\beta_t$ such that
\[
|\mathcal{D}| \cdot \exp\!\left(-\tfrac{\beta_t}{2}\right) \;=\; \frac{6\rho}{\pi^2 t^2},
\]
which yields
\[
\beta_t \;=\; 2 \log\!\left( \frac{|\mathcal{D}| \pi^2 t^2}{6\rho} \right).
\]

Finally, applying the union bound over all $t \in \mathbb{N}^+$ establishes that, with probability at least $1-\rho$, the stated inequality holds for all $t \in \mathbb{N}^+$ and all $x \in \mathcal{D}$.
\end{proof}

\begin{lemma}
\label{lem:regret-bound}
Fix $t \geq 1$. Suppose that for all $x \in \mathcal{D}$,
\begin{equation}
\label{eq:confidence-bd-assump}
\big| f(x) - \mu_{g,N}(x) - \mu_{\delta,t-1}(x) \big|
\;\leq\; \sqrt{\beta_t}\,\sqrt{\sigma^2_{g,N}(x) + \sigma^2_{\delta,t-1}(x)}.
\end{equation}
Then the instantaneous regret $r_t := f(x^\star) - f(x_t)$ satisfies
\[
r_t \;\leq\; 2 \sqrt{\beta_t}\,\sqrt{\sigma^2_{g,N}(x_t) + \sigma^2_{\delta,t-1}(x_t)}.
\]
\end{lemma}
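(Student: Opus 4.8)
The plan is to reproduce the classical three-term GP-UCB argument, now applied to the \emph{combined} posterior for $f$ induced by the additive decomposition. For brevity, write $\hat{\mu}_{t-1}(x) := \mu_{g,N}(x) + \mu_{\delta,t-1}(x)$ for the plug-in estimate of $f(x)$ and $s_{t}(x) := \sqrt{\sigma^2_{g,N}(x) + \sigma^2_{\delta,t-1}(x)}$ for its posterior standard deviation, so that the hypothesis in eq.~\eqref{eq:confidence-bd-assump} reads $|f(x) - \hat{\mu}_{t-1}(x)| \leq \sqrt{\beta_t}\, s_t(x)$ for every $x \in \mathcal{D}$, and the acquisition rule eq.~\eqref{eq:x-update} reads $x_t \in \arg\max_{x \in \mathcal{D}}\{\hat{\mu}_{t-1}(x) + \sqrt{\beta_t}\, s_t(x)\}$.

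First I would apply the hypothesis at the optimum $x^\star$ to pass from the true value to its upper confidence surface: $f(x^\star) \leq \hat{\mu}_{t-1}(x^\star) + \sqrt{\beta_t}\, s_t(x^\star)$. Next, because $x_t$ maximizes exactly this upper confidence surface, I invoke optimality to get $\hat{\mu}_{t-1}(x^\star) + \sqrt{\beta_t}\, s_t(x^\star) \leq \hat{\mu}_{t-1}(x_t) + \sqrt{\beta_t}\, s_t(x_t)$. Finally I apply the hypothesis once more, now at the queried point $x_t$ and on its \emph{lower} side, to return from the estimate to the true value: $\hat{\mu}_{t-1}(x_t) \leq f(x_t) + \sqrt{\beta_t}\, s_t(x_t)$. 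Chaining these three inequalities cancels the two $\hat{\mu}_{t-1}$ terms and gives $f(x^\star) \leq f(x_t) + 2\sqrt{\beta_t}\, s_t(x_t)$, which is precisely $r_t \leq 2\sqrt{\beta_t}\sqrt{\sigma^2_{g,N}(x_t) + \sigma^2_{\delta,t-1}(x_t)}$ after subtracting $f(x_t)$ from both sides.

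There is no genuine analytic obstacle here: the lemma is stated \emph{conditionally} on the event in eq.~\eqref{eq:confidence-bd-assump}, so all the probabilistic bookkeeping—the union bound over $\mathcal{D}$ and over $t$, and the resulting choice of $\beta_t$—has already been discharged in Lemma~\ref{lemma:confidence-bound}. The only point requiring care is that the confidence inequality must be used in both directions, as an upper bound at $x^\star$ and as a lower bound at $x_t$, and that the middle step relies on $x_t$ being an exact maximizer of the \emph{same} acquisition functional appearing on both sides, so that the concatenation telescopes cleanly. Since the additive structure enters only through the definitions of $\hat{\mu}_{t-1}$ and $s_t$, the argument is otherwise identical to the single-function GP-UCB case.
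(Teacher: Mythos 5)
Your proposal is correct and follows exactly the same three-step chaining argument as the paper's proof (upper confidence bound at $x^\star$, optimality of $x_t$ under the acquisition rule, lower-side confidence bound at $x_t$), which is itself the standard Lemma~5.2 argument of \citet{srinivas2010gaussian} adapted to the combined posterior. No gaps.
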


\begin{proof}
This proof follows the argument of Lemma~5.2 in~\cite{srinivas2010gaussian}.  
By the definition of GP-UCB, the chosen point $x_t$ maximizes the upper confidence bound:
\[
\mu_{g,N}(x_t) + \mu_{\delta,t-1}(x_t)
+ \sqrt{\beta_t}\,\sqrt{\sigma^2_{g,N}(x_t) + \sigma^2_{\delta,t-1}(x_t)}
\;\geq\;
\mu_{g,N}(x^\star) + \mu_{\delta,t-1}(x^\star)
+ \sqrt{\beta_t}\,\sqrt{\sigma^2_{g,N}(x^\star) + \sigma^2_{\delta,t-1}(x^\star)}.
\]
Using the confidence bound \eqref{eq:confidence-bd-assump}, we have
\[
\mu_{g,N}(x^\star) + \mu_{\delta,t-1}(x^\star)
+ \sqrt{\beta_t}\,\sqrt{\sigma^2_{g,N}(x^\star) + \sigma^2_{\delta,t-1}(x^\star)}
\;\geq\; f(x^\star).
\]
Hence,
\[
f(x^\star) - \mu_{g,N}(x_t) - \mu_{\delta,t-1}(x_t)
\;\leq\; \sqrt{\beta_t}\,\sqrt{\sigma^2_{g,N}(x_t) + \sigma^2_{\delta,t-1}(x_t)}.
\]

On the other hand, applying \eqref{eq:confidence-bd-assump} at $x_t$ gives
\[
\mu_{g,N}(x_t) + \mu_{\delta,t-1}(x_t) - f(x_t)
\;\leq\; \sqrt{\beta_t}\,\sqrt{\sigma^2_{g,N}(x_t) + \sigma^2_{\delta,t-1}(x_t)}.
\]

Combining the two inequalities, we obtain
\begin{align*}
r_t
&= f(x^\star) - f(x_t) \\
&= \big(f(x^\star) - \mu_{g,N}(x_t) - \mu_{\delta,t-1}(x_t)\big)
   + \big(\mu_{g,N}(x_t) + \mu_{\delta,t-1}(x_t) - f(x_t)\big) \\
&\leq 2 \sqrt{\beta_t}\,\sqrt{\sigma^2_{g,N}(x_t) + \sigma^2_{\delta,t-1}(x_t)},
\end{align*}
which proves the claim.
\end{proof}

\begin{lemma}
\label{lemma:variance-decreasing}
Let $\kappa(x,x')$ be a kernel function defined on the domain $\mathcal{D}$. For any positive integer $n$ and any sequence $\{x_t\}_{t \geq 1} \subseteq \mathcal{D}$, define
\begin{align*}
\bm{\kappa}_n(x) &\coloneqq \big[ \kappa(x_1, x), \ldots, \kappa(x_n, x) \big]^{\top} \in \mathbb{R}^n, \quad x \in \mathcal{D}, \\
\mathbf{K}_n &\coloneqq \big[ \kappa(x_i, x_j) \big]_{1 \leq i,j \leq n} \in \mathbb{R}^{n \times n}.
\end{align*}
Further, set
\[
\sigma_n^2(x) \coloneqq \kappa(x,x) - \bm{\kappa}_n(x)^{\!\top} \big( \mathbf{K}_n + \sigma^2 \mathbf{I} \big)^{-1} \bm{\kappa}_n(x), 
\quad x \in \mathcal{D}.
\]
Then, for all $x \in \mathcal{D}$,
\[
\sigma_{n+1}^2(x) \;\leq\; \sigma_n^2(x).
\]
\end{lemma}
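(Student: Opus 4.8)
The plan is to reduce the addition of the $(n+1)$-th point to a single bordered-matrix step and exploit the block structure of the regularized Gram matrix through its Schur complement. Introducing the shorthand $A \coloneqq \mathbf{K}_n + \sigma^2 \mathbf{I}$, $b \coloneqq \bm{\kappa}_n(x_{n+1})$, and $c \coloneqq \kappa(x_{n+1}, x_{n+1}) + \sigma^2$, I would first write the $(n+1)$-point matrix in block form,
\begin{equation*}
\mathbf{K}_{n+1} + \sigma^2 \mathbf{I} \;=\; \begin{pmatrix} A & b \\ b^\top & c \end{pmatrix},
\end{equation*}
and identify the Schur complement of the leading block as $s \coloneqq c - b^\top A^{-1} b$. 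The crucial preliminary observation is that $s = \sigma_n^2(x_{n+1}) + \sigma^2 > 0$: this both certifies invertibility of the bordered matrix and turns out to be exactly the denominator governing the variance decrease.

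Next I would apply the standard block-inverse (Schur-complement) formula to expand $(\mathbf{K}_{n+1} + \sigma^2 \mathbf{I})^{-1}$ in terms of $A^{-1}$ and $s^{-1}$. Setting $u \coloneqq \bm{\kappa}_n(x)$ and $v \coloneqq \kappa(x_{n+1}, x)$, so that $\bm{\kappa}_{n+1}(x) = (u^\top, v)^\top$, I would substitute this inverse into the definition of $\sigma_{n+1}^2(x)$ and expand the quadratic form $\bm{\kappa}_{n+1}(x)^\top (\mathbf{K}_{n+1} + \sigma^2 \mathbf{I})^{-1} \bm{\kappa}_{n+1}(x)$. The contributions from the off-diagonal and corner blocks collect into a single perfect square, yielding the exact identity
\begin{equation*}
\sigma_{n+1}^2(x) \;=\; \sigma_n^2(x) \;-\; \frac{\bigl(v - b^\top A^{-1} u\bigr)^2}{s}.
\end{equation*}
The numerator here is precisely the squared posterior covariance between $x$ and $x_{n+1}$ under the first $n$ observations, which gives the formula a transparent probabilistic reading: each new query removes a nonnegative amount of variance proportional to how correlated $x$ is with that query.

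Since $A$ is positive definite (as $\mathbf{K}_n \succeq 0$ and $\sigma^2 > 0$), we have $s > 0$, and the subtracted term is a nonnegative quantity divided by a positive scalar; hence $\sigma_{n+1}^2(x) \leq \sigma_n^2(x)$ for every $x \in \mathcal{D}$, which is the claim. Monotonicity along the entire sequence then follows by iterating this one-step bound.

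The hard part will not be conceptual but bookkeeping in the expansion: the quadratic form produces four terms, and the key move is recognizing that three of them assemble into $s^{-1}\bigl(v - b^\top A^{-1} u\bigr)^2$ while the remaining term reconstructs $u^\top A^{-1} u$, so that $\kappa(x,x) - u^\top A^{-1} u$ is exactly $\sigma_n^2(x)$. The only other point requiring care is to confirm the invertibility conditions needed for the Schur-complement formula, namely that $A$ is nonsingular and that $s>0$, both of which I would dispatch via positive definiteness of the noise-regularized kernel matrix.
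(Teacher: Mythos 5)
Your proposal is correct and follows essentially the same route as the paper's proof: both write the augmented regularized Gram matrix in bordered block form, identify the Schur complement $s = \sigma_n^2(x_{n+1}) + \sigma^2 > 0$, apply the block-inversion formula, and arrive at the exact identity $\sigma_{n+1}^2(x) = \sigma_n^2(x) - s^{-1}\bigl(\kappa(x_{n+1},x) - \bm{\kappa}_n(x_{n+1})^\top (\mathbf{K}_n+\sigma^2\mathbf{I})^{-1}\bm{\kappa}_n(x)\bigr)^2$, from which monotonicity is immediate. The only differences are notational, so nothing further is needed.
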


\begin{proof}
Fix $n$ and $x\in\mathcal{D}$. Let
\[
\mathbf{A} \coloneqq \mathbf{K}_n+\sigma^2\mathbf{I}\in\mathbb{R}^{n\times n},\qquad
\mathbf{u}\coloneqq \bm{\kappa}_n(x_{n+1})\in\mathbb{R}^n,\qquad
c\coloneqq \kappa(x_{n+1},x_{n+1})+\sigma^2.
\]
Then
\[
\mathbf{K}_{n+1}+\sigma^2\mathbf{I}=\begin{bmatrix}
\mathbf{A} & \mathbf{u} \\[2pt]
\mathbf{u}^\top & c
\end{bmatrix},
\qquad
\bm{\kappa}_{n+1}(x)=\begin{bmatrix}\bm{\kappa}_n(x)\\ \kappa(x_{n+1},x)\end{bmatrix}.
\]
Since $\kappa$ is positive semidefinite and $\sigma^2>0$, $\mathbf{A}$ is positive definite. The Schur complement of the block $\mathbf{A}$ is
\[
s \coloneqq c - \mathbf{u}^\top \mathbf{A}^{-1} \mathbf{u} \;=\; \sigma^2 + \kappa(x_{n+1},x_{n+1}) - \bm{\kappa}_n(x_{n+1})^\top \mathbf{A}^{-1}\bm{\kappa}_n(x_{n+1})
= \sigma^2 + \sigma_n^2(x_{n+1}) \;>\; 0.
\]
By the block inversion formula,
\[
\big(\mathbf{K}_{n+1}+\sigma^2\mathbf{I}\big)^{-1}
=
\begin{bmatrix}
\mathbf{A}^{-1}+\mathbf{A}^{-1} \mathbf{u} s^{-1} \mathbf{u}^\top \mathbf{A}^{-1} & -\mathbf{A}^{-1} \mathbf{u} s^{-1}\\[2pt]
-s^{-1} \mathbf{u}^\top \mathbf{A}^{-1} & s^{-1}
\end{bmatrix}.
\]
Let $\mathbf{v} \coloneqq \bm{\kappa}_n(x)$ and $\alpha\coloneqq \kappa(x_{n+1},x)$. Then
\begin{align*}
\bm{\kappa}_{n+1}(x)^\top\big(\mathbf{K}_{n+1}+\sigma^2\mathbf{I}\big)^{-1}\bm{\kappa}_{n+1}(x)
&= \begin{bmatrix} \mathbf{v}^\top & \alpha \end{bmatrix}
\begin{bmatrix}
\mathbf{A}^{-1}+\mathbf{A}^{-1} \mathbf{u} s^{-1} \mathbf{u}^\top \mathbf{A}^{-1} & -\mathbf{A}^{-1} \mathbf{u} s^{-1}\\[2pt]
-s^{-1} \mathbf{u}^\top \mathbf{A}^{-1} & s^{-1}
\end{bmatrix}
\begin{bmatrix} \mathbf{v} \\ \alpha \end{bmatrix}\\[4pt]
&= \mathbf{v}^\top \mathbf{A}^{-1} \mathbf{v} \;+\; s^{-1}\big(\alpha - \mathbf{u}^\top \mathbf{A}^{-1} \mathbf{v} \big)^2.
\end{align*}
Therefore,
\[
\sigma_{n+1}^2(x)
= \kappa(x,x) - \Big[ \mathbf{v}^\top \mathbf{A}^{-1} \mathbf{v} + s^{-1}\big(\alpha - \mathbf{u}^\top \mathbf{A}^{-1} \mathbf{v} \big)^2 \Big]
= \sigma_n^2(x) - \frac{\big(\alpha - \mathbf{u}^\top \mathbf{A}^{-1} \mathbf{v}\big)^2}{s}.
\]
Since $s>0$, the last term is nonnegative, yielding $\sigma_{n+1}^2(x)\le \sigma_n^2(x)$.
\end{proof}

\begin{lemma}
\label{lem:variance-sum-bound}
Let the maximum mutual information gain be defined as
\[
\gamma_{g,N} \;\coloneqq\; \max_{\substack{A \subseteq \mathcal{D} \\ |A| = N}} I\!\left(y^{(0)}; g_A\right).
\]
Then,
\[
\sum_{t=1}^T \sigma_{g,N}^2(x_t) \;\leq\; \frac{2T \gamma_{g,N}\, \sigma_0^2}{\,N - 2\gamma_{g,N}\,}.
\]
\end{lemma}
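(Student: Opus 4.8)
The plan is to reduce the aggregate bound to a single uniform estimate on the source posterior variance. Since the sum contains $T$ terms, it suffices to prove the pointwise inequality $\sigma_{g,N}^2(x)\le \frac{2\sigma_0^2\gamma_{g,N}}{N-2\gamma_{g,N}}$ for every $x\in\mathcal{D}$ and then add it over $t=1,\dots,T$ (the query points $x_t$ never enter except through $\sigma_{g,N}^2(x_t)$, which is conditioned on the fixed $N$ source points). I would first record the equivalent cleared form $\frac{N\,\sigma_{g,N}^2(x)}{\sigma_0^2+\sigma_{g,N}^2(x)}\le 2\gamma_{g,N}$, which is what I actually aim to establish; the elementary inequality $u/(1+u)\le\log(1+u)$ will be the bridge between this rational quantity and an information-gain expression.

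The key identity I would exploit is that the marginal information gained by adding one observation of $g$ at $x$ to the $N$ source points equals $\tfrac12\log\!\bigl(1+\sigma_{g,N}^2(x)/\sigma_0^2\bigr)$. Thus the target reduces to showing that this marginal gain is at most $\gamma_{g,N}/N$, i.e.\ no larger than the per-sample average information content of the source set. To produce the crucial factor $1/N$, I would use a leave-one-out construction: for each $j\in[N]$, the set $\{x_i^{(0)}\}_{i\neq j}\cup\{x\}$ has exactly $N$ elements, so its information gain is at most $\gamma_{g,N}$; expanding via the chain rule and invoking the monotonicity of the posterior variance (Lemma~\ref{lemma:variance-decreasing}) to replace the $(N-1)$-point variance at $x$ by the smaller $\sigma_{g,N}^2(x)$ yields $\tfrac12\log(1+\sigma_{g,N}^2(x)/\sigma_0^2)\le \gamma_{g,N}-I(\{x_i^{(0)}\}_{i\neq j})$. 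Averaging these $N$ inequalities over $j$ and controlling $\tfrac1N\sum_j I(\{x_i^{(0)}\}_{i\neq j})$ by submodularity of mutual information (the leave-one-out marginals sum to at most the realized source information gain) produces the desired $\tfrac12\log(1+\sigma_{g,N}^2(x)/\sigma_0^2)\le \gamma_{g,N}/N$.

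With this per-point logarithmic bound, the endgame is algebraic: from $N\log(1+\sigma_{g,N}^2(x)/\sigma_0^2)\le 2\gamma_{g,N}$ together with $u/(1+u)\le\log(1+u)$ I obtain $\frac{N\sigma_{g,N}^2(x)}{\sigma_0^2+\sigma_{g,N}^2(x)}\le 2\gamma_{g,N}$, rearrange to $\sigma_{g,N}^2(x)\le \frac{2\sigma_0^2\gamma_{g,N}}{N-2\gamma_{g,N}}$ (which also exposes the necessary regime $N>2\gamma_{g,N}$), and sum over $t$. As a sanity check on the averaging step, I would separately compute the exact trace identity $\sum_{i=1}^N \sigma_{g,N}^2(x_i^{(0)})=\sigma_0^2\,\tr\!\bigl(\mathbf{K}_{g,N}(\mathbf{K}_{g,N}+\sigma_0^2\mathbf{I})^{-1}\bigr)=\sigma_0^2\sum_j \tfrac{\lambda_j}{\lambda_j+\sigma_0^2}$ and bound it by $2\sigma_0^2\gamma_{g,N}$ via the same $u/(1+u)\le\log(1+u)$ inequality, confirming that the average source-point variance is of order $\sigma_0^2\gamma_{g,N}/N$.

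The step I expect to be the main obstacle is precisely the transfer from an information-gain statement about the source configuration to a uniform bound at an arbitrary query point. The subtlety is that $\gamma_{g,N}$ is defined as the \emph{maximum} over size-$N$ subsets, whereas the leave-one-out argument naturally produces the \emph{realized} gain $I(\{x_i^{(0)}\}_{i=1}^N)$ of the actual source set; the clean factor $1/N$ hinges on these being comparable, which holds when the source points are (near-)optimally informative. I would therefore need to argue carefully that the realized source information gain is of the same order as $\gamma_{g,N}$ (or restate the intermediate bound in terms of the realized gain), since otherwise a query point far from a poorly-spread source cloud could leave $\sigma_{g,N}^2(x)$ of order one while $\gamma_{g,N}/N\to 0$. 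Pinning down this comparison and the attendant constants, while keeping the monotonicity and submodularity steps valid, is the delicate part; the remaining algebra and the summation over $t$ are routine.
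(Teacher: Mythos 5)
Your route is genuinely different from the paper's, but both arguments collide with the same wall, and the obstacle you flag in your last paragraph is exactly the right one. Your leave-one-out construction gives, after the chain rule, monotonicity (Lemma~\ref{lemma:variance-decreasing}), and the submodularity bound $\sum_{j}\bigl(I_N - I_j\bigr)\le I_N$ (so $\tfrac1N\sum_j I_j\ge\tfrac{N-1}{N}I_N$), the inequality $\tfrac12\log\bigl(1+\sigma_{g,N}^2(x)/\sigma_0^2\bigr)\le \gamma_{g,N}-\tfrac{N-1}{N}I_N$, where $I_N$ is the \emph{realized} information gain of the actual source design. This collapses to the needed $\gamma_{g,N}/N$ only when $I_N$ attains (or is within $O(\gamma_{g,N}/N)$ of) the maximum $\gamma_{g,N}$ --- precisely the comparison you say you would "need to argue carefully." That comparison cannot be established in general, so as written the proof is incomplete. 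Everything surrounding this step is correct: the marginal-gain identity, the bridge $u/(1+u)\le\log(1+u)$, the rearrangement to $\sigma_{g,N}^2(x)\le 2\sigma_0^2\gamma_{g,N}/(N-2\gamma_{g,N})$ (valid only when $N>2\gamma_{g,N}$), the summation over $t$, and the trace identity $\sum_{i=1}^N\sigma_{g,N}^2(x_i^{(0)})=\sigma_0^2\,\tr\bigl(\mathbf{K}_{g,N}(\mathbf{K}_{g,N}+\sigma_0^2\mathbf{I})^{-1}\bigr)\le 2\sigma_0^2\gamma_{g,N}$ --- though note the latter only controls the variance \emph{at the source points}, which is why it cannot substitute for the uniform bound.

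You should know that the paper's own proof does not overcome this obstacle either; it takes a different path to the same cliff. The paper telescopes the realized source gain in greedy order to get $\sum_{i=1}^N \sigma_{g,i-1}^2(x_i^{(0)})/\bigl(\sigma_0^2+\sigma_{g,i-1}^2(x_i^{(0)})\bigr)\le 2\gamma_{g,N}$, then invokes the fixed-$x$ monotonicity $\sigma_{g,N}^2(x)\le\sigma_{g,i-1}^2(x)$ and "sums over $i$" to conclude $N\sigma_{g,N}^2(x)/\bigl(\sigma_0^2+\sigma_{g,N}^2(x)\bigr)\le 2\gamma_{g,N}$. That step silently replaces the summands evaluated at the source points $x_i^{(0)}$ by the same ratio evaluated at an arbitrary query point $x$; Lemma~\ref{lemma:variance-decreasing} compares variances at a \emph{common} point across conditioning sets and does not license this substitution. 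Your "poorly spread source cloud" scenario is a genuine counterexample to the pointwise claim: with the source points clustered away from $x$ (e.g., two kernel-independent locations, all $N$ samples at one of them), $\sigma_{g,N}^2(x)=\Theta(1)$ while $2\sigma_0^2\gamma_{g,N}/(N-2\gamma_{g,N})\to 0$, so the lemma as stated needs an additional hypothesis --- either that the source design is (near-)information-maximizing so that $I_N\approx\gamma_{g,N}$, or a coverage condition directly bounding $\sup_{x}\sigma_{g,N}^2(x)$. In short: your diagnosis is correct, your sketch is honest about where it is incomplete, and the missing piece is a defect of the statement rather than of your particular route.
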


\begin{proof}
This proof follows the argument of Lemma~5.3 in~\cite{srinivas2010gaussian}.  

Consider observations of the form
\[
y_i^{(0)} = g(x_i^{(0)}) + \varepsilon_i^{(0)}, 
\quad \varepsilon_i^{(0)} \sim \mathcal{N}(0,\sigma_0^2),
\quad i = 1,\dots,N.
\]
Let $A^{(0)} = \{x_1^{(0)},\dots,x_N^{(0)}\}$. The mutual information between $y_N^{(0)}$ and $g_{A^{(0)}}$ is
\[
I(y_N^{(0)}; g_{A^{(0)}}) \;=\; \tfrac{1}{2} \sum_{i=1}^N \log\!\left(1 + \sigma_0^{-2}\, \sigma^2_{g,i-1}(x_i^{(0)})\right).
\]

By the definition of $\gamma_{g,N}$,
\begin{equation}
\label{eq:gamma-bound}
\sum_{i=1}^N \log\!\left(1 + \sigma_0^{-2}\, \sigma^2_{g,i-1}(x_i^{(0)})\right) 
\;\leq\; 2\gamma_{g,N}.
\end{equation}

Since $\log(1+x) \geq \tfrac{x}{1+x}$ for all $x \geq 0$, we obtain
\[
\log\!\left(1 + \sigma_0^{-2}\, \sigma^2_{g,i-1}(x_i^{(0)})\right) 
\;\geq\; \frac{\sigma^2_{g,i-1}(x_i^{(0)})}{\sigma_0^2 + \sigma^2_{g,i-1}(x_i^{(0)})}.
\]
Plugging this into \eqref{eq:gamma-bound} yields
\[
\sum_{i=1}^N \frac{\sigma^2_{g,i-1}(x_i^{(0)})}{\sigma_0^2 + \sigma^2_{g,i-1}(x_i^{(0)})}
\;\leq\; 2\gamma_{g,N}.
\]

By Lemma~\ref{lemma:variance-decreasing}, $\sigma^2_{g,N}(x) \leq \sigma^2_{g,i-1}(x)$ for all $x \in \mathcal{D}$ and $2 \leq i \leq N$. Thus,
\[
\frac{\sigma^2_{g,N}(x)}{\sigma_0^2 + \sigma^2_{g,N}(x)}
\;\leq\; \frac{\sigma^2_{g,i-1}(x)}{\sigma_0^2 + \sigma^2_{g,i-1}(x)}
\quad \text{for all } i.
\]
Summing over $i=1,\dots,N$ gives
\[
N \cdot \frac{\sigma^2_{g,N}(x)}{\sigma_0^2 + \sigma^2_{g,N}(x)} \;\leq\; 2\gamma_{g,N}.
\]

Rearranging yields
\[
\sigma^2_{g,N}(x) \;\leq\; \frac{2\gamma_{g,N}\,\sigma_0^2}{N - 2\gamma_{g,N}}.
\]

Finally, summing this bound over $t=1,\dots,T$ gives
\[
\sum_{t=1}^T \sigma^2_{g,N}(x_t) \;\leq\; \frac{2T \gamma_{g,N}\,\sigma_0^2}{N - 2\gamma_{g,N}},
\]
as claimed.
\end{proof}

\subsection{Main Proof}

The proof follows the argument of Lemma~5.4 in~\cite{srinivas2010gaussian}.  

From Lemma~\ref{lemma:confidence-bound} and Lemma~\ref{lem:regret-bound}, we know that with probability at least $1-\rho$,  
\[
r_t^2 \;\leq\; 4 \beta_t \bigl(\sigma_{g,N}^2(x_t) + \sigma_{\delta,t-1}^2(x_t)\bigr), 
\qquad t \geq 1.
\]
Summing over $t=1,\dots,T$ gives
\[
\sum_{t=1}^T r_t^2 
\;\leq\; 4 \beta_t \sum_{t=1}^T \bigl(\sigma_{g,N}^2(x_t) + \sigma_{\delta,t-1}^2(x_t)\bigr).
\]

\textbf{The $g$-term.}  
By Lemma~\ref{lem:variance-sum-bound},
\[
\sum_{t=1}^T \sigma_{g,N}^2(x_t) 
\;\leq\; \frac{2T \gamma_{g,N} \sigma_0^2}{N - 2\gamma_{g,N}}.
\]

\textbf{The $\delta$-term.}  
Recall
\[
C_2 \;=\; \frac{\tau^{2}/\sigma^{2}}{\log(1+\tau^{2}/\sigma^{2})}.
\]
For any $s^2 \in [0,\,\tau^2\sigma^{-2}]$ we have
\begin{equation}
\label{eq:C2-ineq}
s^2 \;\leq\; C_2 \log(1+s^2),
\end{equation}
because the function $h(u) = u/\log(1+u)$ is nondecreasing and achieves its maximum at $u=\tau^2\sigma^{-2}$, which equals $C_2$.  

Now set
\[
s^2 \;=\; \sigma_{\delta,0}^{-2}(x_t)\, \sigma_{\delta,t-1}^2(x_t).
\]
Since $\sigma_{\delta,0}^{-2}(x_t) \leq \sigma^{-2}$ and $\sigma_{\delta,t-1}^2(x_t) \leq \tau^2$, we indeed have $s^2 \in [0,\,\tau^2\sigma^{-2}]$, so~\eqref{eq:C2-ineq} applies:
\[
\sigma_{\delta,0}^{-2}(x_t)\, \sigma_{\delta,t-1}^2(x_t)
\;\leq\;
C_2 \log\!\left(1 + \sigma_{\delta,0}^{-2}(x_t)\, \sigma_{\delta,t-1}^2(x_t)\right).
\]

Multiplying both sides by $\sigma_{\delta,0}^2(x_t)$ and summing over $t=1,\dots,T$ yields
\[
\sum_{t=1}^T \sigma_{\delta,t-1}^2(x_t)
\;\leq\; \sigma_{\delta,0}^2(x_t)\, C_2 \sum_{t=1}^T 
\log\!\left(1 + \sigma_{\delta,0}^{-2}(x_t)\, \sigma_{\delta,t-1}^2(x_t)\right).
\]
As in the proof of~\eqref{eq:gamma-bound}, the sum of logarithms is bounded by $2\gamma_{\delta,T}$, so
\[
\sum_{t=1}^T \sigma_{\delta,t-1}^2(x_t)
\;\leq\; \sigma_{\delta,0}^2(x_t)\, C_2 \cdot 2\gamma_{\delta,T}.
\]

Finally, since $\sigma_{\delta,0}^2(x_t) = \sigma_{g,N}^2(x_t) + \sigma^2$, we obtain
\[
\sum_{t=1}^T \sigma_{\delta,t-1}^2(x_t) 
\;\leq\; \Biggl(\frac{2 \gamma_{g,N}}{N - 2 \gamma_{g,N}}\,\sigma_0^2 + \sigma^2\Biggr) 
\cdot C_2 \cdot 2 \gamma_{\delta,T}.
\]

\textbf{Conclusion.}  
Combining the bounds for the $g$- and $\delta$- terms, we have
\[
\sum_{t=1}^T r_t^2 \;\leq\; 4 \beta_t \left( 
\frac{2T \gamma_{g,N}\sigma_0^2}{N - 2\gamma_{g,N}} 
+ 2 C_2 \gamma_{\delta,T}\Bigl(\tfrac{2 \gamma_{g,N}}{N - 2 \gamma_{g,N}}\,\sigma_0^2 + \sigma^2\Bigr)
\right).
\]

Finally, by the Cauchy--Schwarz inequality,
\[
R_T^2 \;\leq\; T \sum_{t=1}^T r_t^2,
\]
which completes the proof of Theorem~\ref{thm:regret-bound}.

\section{Additional Proofs}\label{app:add_proof}
\label{sec:additional-proofs}

\subsection{Proof of Corollary~\ref{corollary:asymptotic-bd}}

Since $\gamma_{g,N} = o(N)$, we have
\[
\frac{ \gamma_{g,N} }{ N - 2 \gamma_{g,N} } = \frac{\gamma_{g,N}}{N} \cdot \frac{1}{1-2(\gamma_{g,N}/N)} 
= \frac{\gamma_{g,N}}{N} \cdot \frac{1}{1 - o(1)} 
= \mathcal{O}\!\left( \frac{\gamma_{g,N}}{N} \right).
\]

From \eqref{eq:regret-bound-precise} and the assumption $\tau^2 = \mathcal{O}(\sigma^2)$, it follows that
\begin{equation}
\label{eq:proof-corollary:asymptotic-bd-1}
R_T \leq \mathcal{O}\!\left( \sqrt{ T \beta_T } \left( \frac{ \sigma^2_0 T \gamma_{g,N} }{ N } + \gamma_{\delta,T} \Big( \tfrac{\sigma^2_0 \gamma_{g,N}}{N} + \sigma^2 \Big) \right)^{1/2} \right).
\end{equation}

If
\[
\frac{ \gamma_{g,N} }{ N } = \mathcal{O}\!\left( \frac{ \gamma_{\delta,T} }{ T } \right),
\]
then, since $\gamma_{\delta,T} = \mathcal{O}(T)$, we also have 
\[
\frac{ \sigma^2_0 T \gamma_{g,N} }{ N } = \mathcal{O} \left( \sigma^2_0 \gamma_{\delta,T} \right), \quad
\sigma^2_0 \frac{ \gamma_{g,N} }{ N } = \mathcal{O}(\sigma^2_0).
\]
Substituting into \eqref{eq:proof-corollary:asymptotic-bd-1}, we obtain
\[
R_T = \mathcal{O}\! \left( \left(  \sigma^2 + \sigma^2_0 \right)^{\frac{1}{2}} \sqrt{ T \beta_T \gamma_{\delta,T} } \right).
\]

\subsection{Proof of Proposition~\ref{prop:gamma-rates-tau}}
\label{sec:proof-gamma-rates}

We now provide a proof of Proposition~\ref{prop:gamma-rates-tau}, which establishes the growth rates of the maximum information gain $\gamma_{\delta,T}$ for several common kernel classes.

\begin{proof}
Let $A \subset \mathcal{D}$ with $|A|=T$, and let $K_A$ be the kernel matrix associated with $k_\delta$. By construction,
\[
k_\delta(x,x') \;=\; \tau^2 \bar{k}_\delta(x,x'),
\]
so the eigenvalues of $K_A$ satisfy
\[
\lambda_i \;=\; \tau^2 \bar{\lambda}_i, \qquad i=1,2,\dots,T,
\]
where $\{\bar{\lambda}_i\}$ are the eigenvalues corresponding to $\bar{k}_\delta$.

The mutual information is given by
\[
I(y_A;f_A) = \frac{1}{2}\sum_{i=1}^T \log\!\Big(1 + \tfrac{\lambda_i}{\sigma^2}\Big)
= \frac{1}{2}\sum_{i=1}^T \log\!\Big(1 + \tfrac{\tau^2}{\sigma^2}\bar{\lambda}_i\Big).
\]
Maximizing over all $A \subset \mathcal{D}$ with $|A|=T$ gives the definition of $\gamma_{\delta,T}$.

\paragraph{Step 1. Reduction to eigenvalue tail bounds.}  
Following the approach of \citet{srinivas2010gaussian}, Theorem~4, we split the eigenvalues into the top $T^\star$ and the tail:
\[
\gamma_{\delta,T} \;\le\; \tfrac{1}{2}\!\sum_{i=1}^{T^\star} \log\!\Big(1+\tfrac{\tau^2}{\sigma^2}\bar{\lambda}_i\Big) 
\;+\; \tfrac{T}{2\sigma^2}\!\sum_{i>T^\star} \tau^2 \bar{\lambda}_i.
\]
This decomposition follows from bounding $\log(1+x) \le x$ for small eigenvalues in the tail.

\paragraph{Step 2. Eigen-decay of specific kernels.}
\begin{itemize}[leftmargin=2em]
\item \textbf{Linear kernel.}  
The spectrum has rank at most $d$, with eigenvalues bounded by $O(1)$. Thus,
\[
\gamma_{\delta,T} \;\le\; O\!\big(\tau^2 d \log(eT)\big) + O(\log(1+\tau^2)).
\]

\item \textbf{Squared Exponential (SE) kernel.}  
The eigenvalues of the normalized SE kernel decay exponentially in $i^{1/d}$~\citep{srinivas2010gaussian}. Optimizing $T^\star = O((\log T)^d)$ yields
\[
\gamma_{\delta,T} \;\le\; O\!\big(\tau^2 (\log T)^{d+1}\big) + O(\log(1+\tau^2)).
\]

\item \textbf{Matérn kernel.}  
The eigenvalues of the normalized Matérn kernel with smoothness $\nu$ decay polynomially as $\bar{\lambda}_i = O(i^{-\frac{2\nu+d}{d}})$~\citep{srinivas2010gaussian}. Optimizing $T^\star$ in the bound gives
\[
\gamma_{\delta,T} \;\le\; O\!\Big(\tau^2 T^{\frac{d(d+1)}{2\nu+d(d+1)}} \log T\Big) + O(\log(1+\tau^2)).
\]
\end{itemize}

\paragraph{Step 3. Collecting terms.}  
The additional $\log(1+\tau^2)$ term appears from bounding the contribution of the first few eigenvalues, which is independent of $T$. Combining the above establishes the claimed bounds for all three kernel families.
\end{proof}

\section{Additional Experimental Details}\label{sec:exp-detail}
This section provides the detailed configurations, including implementation settings and hyperparameter, and additional results in average regrets.

\subsection{Real-World Experimental Settings}
Here we describe the real-world experimental configurations, including algorithm implementation, dataset preparation, hyperparameter choices.

\subsubsection{Algorithm Implementation}
We used a single NVIDIA A40 GPU (48GB) for our experiments. In the Gradient boosting task, the observation noise for the source function $g$ is set to $\sigma_0 = 0.02$, and for the target function $f$ to $\sigma = 0.01$. 
Both functions employ a Matérn kernel with smoothness parameter $\nu = \tfrac{5}{2}$, which is kept consistent across all experiments.; 
the lengthscales are $1.8$ for the source and $1.0$ for the target. 
The difference function uses a squared exponential kernel with lengthscale $1.2$. The variance of the difference kernel is bounded by $\tau^2 = 0.2^2$. We fix $\beta_t = 0.2$ for all algorithms (GP-UCB, Env-GP, Diff-GP, and DeltaBO) to balance exploration and exploitation uniformly across all algorithms. In the Multi-layer perceptron task, the kernel choices and variance bound for the difference kernel remain the same, 
while the lengthscales are set to $2.0$, $1.0$, and $1.0$ for the source, target, and difference functions. 
For MLP task, we fix $\beta_t = 0.3$.

\subsubsection{Hyperparameter Settings}
Hyperparameters settings mainly follow \citet{liu2023global}. Hyperparameters may take either continuous or categorical forms. To ensure a consistent comparison between DeltaBO and Bayesian optimization baselines, 
we restrict all hyperparameter tuning experiments to a continuous search domain 
$[0,10]^{d}$. For categorical hyperparameters, we assign disjoint subintervals of equal length 
within this range to represent each category. 
For instance, consider the hyperparameter indicating whether to shuffle samples 
in each iteration (bool, True or False); 
we map the intervals $[0,5)$ and $[5,10]$ to the two options, respectively. 
Continuous hyperparameters are linearly scaled to the same $[0,10]$ range. 
As an example, if a hyperparameter originally takes values in $(0,1)$, 
we multiply its value by $10$ to obtain its mapped representation in $(0,10)$.

Hyperparameters for real-world task are listed as follows.

\textbf{Classification with Multi-Layer Perceptron.}
\begin{enumerate}
    \item Activation function (string, ``identity'', ``logistic'', ``tanh'', or ``relu'').
    \item Strength of the L2 regularization term (float, $[10^{-6}, 10^{-2}]$).
    \item Initial learning rate (float, $[10^{-6}, 10^{-2}]$).
    \item Maximum number of iterations (integer, $[100, 300]$).
    \item Whether to shuffle samples in each iteration (bool, True or False).
    \item Exponential decay rate for the first moment vector (float, $(0, 1)$).
    \item Exponential decay rate for the second moment vector (float, $(0, 1)$).
    \item Maximum number of epochs without tolerance improvement (integer, $[1, 10]$).
\end{enumerate}

\textbf{Classification with Gradient Boosting.}
\begin{enumerate}
    \item Loss function (string, ``logloss'' or ``exponential'').
    \item Learning rate (float, $(0, 1)$).
    \item Number of estimators (integer, $[20, 200]$).
    \item Fraction of samples used for fitting base learners (float, $(0, 1)$).
    \item Criterion to measure split quality (string, ``friedman\_mse'' or ``squared\_error'').
    \item Minimum number of samples required to split an internal node (integer, $[2, 10]$).
    \item Minimum number of samples required to be at a leaf node (integer, $[1, 10]$).
    \item Minimum weighted fraction of the total sum of weights (float, $(0, 0.5)$).
    \item Maximum depth of regression estimators (integer, $[1, 10]$).
    \item Number of features considered for best split (float, ``sqrt'' or ``log2'').
    \item Maximum number of leaf nodes in best-first fashion (integer, $[2, 10]$).
\end{enumerate}

\subsection{Synthetic Experimental Settings}
For Gaussian kernel functions, the lengthscale of all kernels is 0.1, with observation noise for both source ($\sigma_0^2$) and target ($\sigma^2$) being 0.01, and $\tau^2 = 0.3^2$. For Bohachevsky functions, lengthscale of source, target, and difference kernel is 1.6, 0.8, and 1.0 
with $\sigma_0^2= 0.24$, $\sigma^2 = 0.06$ and $\tau^2 = 0.3^2$. In assumption-satisfied setting, 
$\sigma_0^2= 0.1$ and $\sigma^2 = 0.01$. The target GP in the baseline algorithms is modeled using a Matérn kernel with lengthscale 1.0.  

\subsection{Additional Experimental Results in Average Regrets}
Figure~\ref{average_regret} shows the performances of all compared algorithms in average regrets. The experimental settings are exactly the same as those for cumulative regrets in the main paper, and similar performances can be observed.

\begin{figure*}[!htbp]  
\centering
\begin{subfigure}{0.48\textwidth}
    \includegraphics[width=\linewidth]{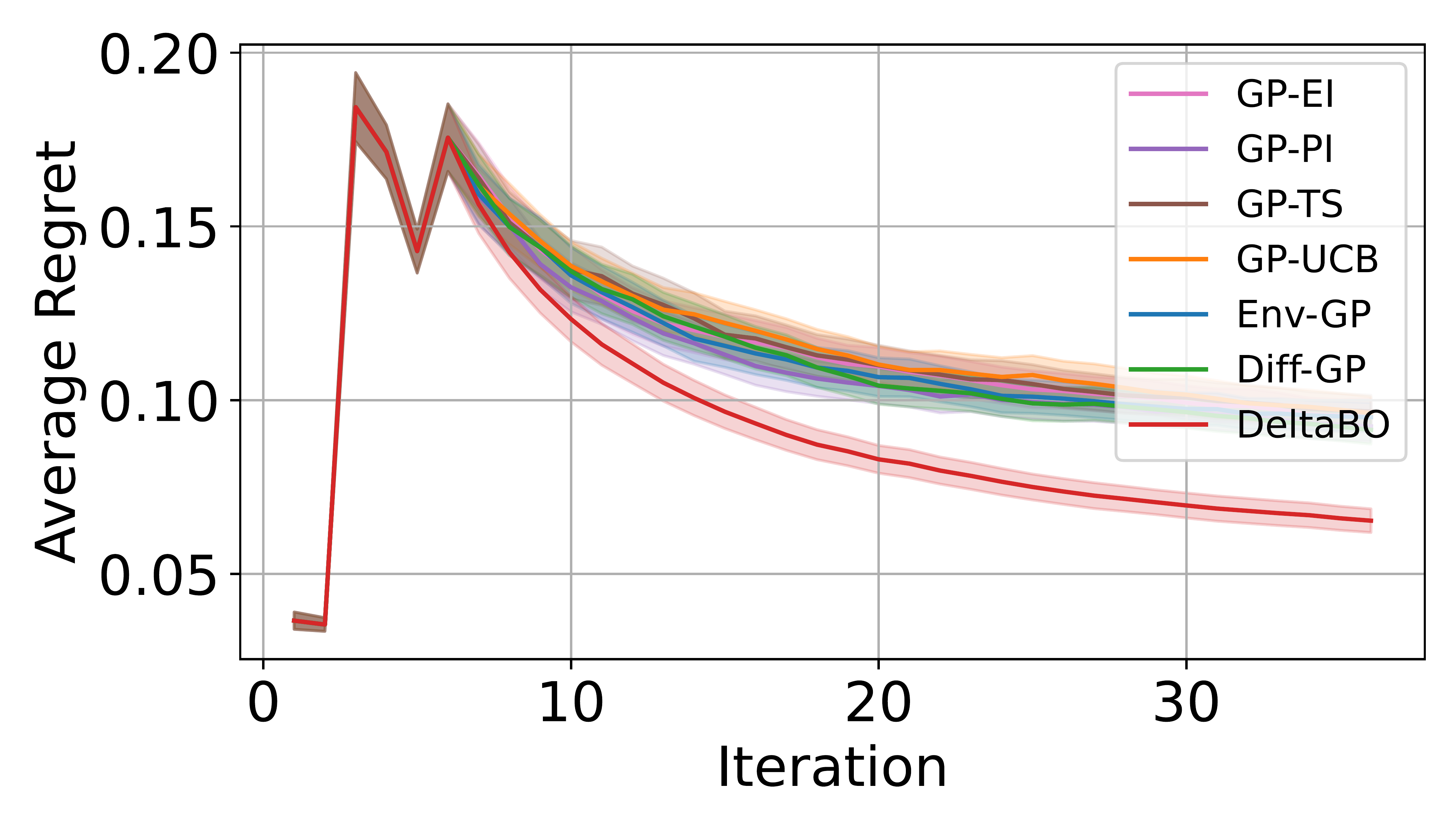}
    \caption{AutoML on GBoost}
    \label{fig:rw-XGB_a}
\end{subfigure}
\begin{subfigure}{0.48\textwidth}
    \includegraphics[width=\linewidth]{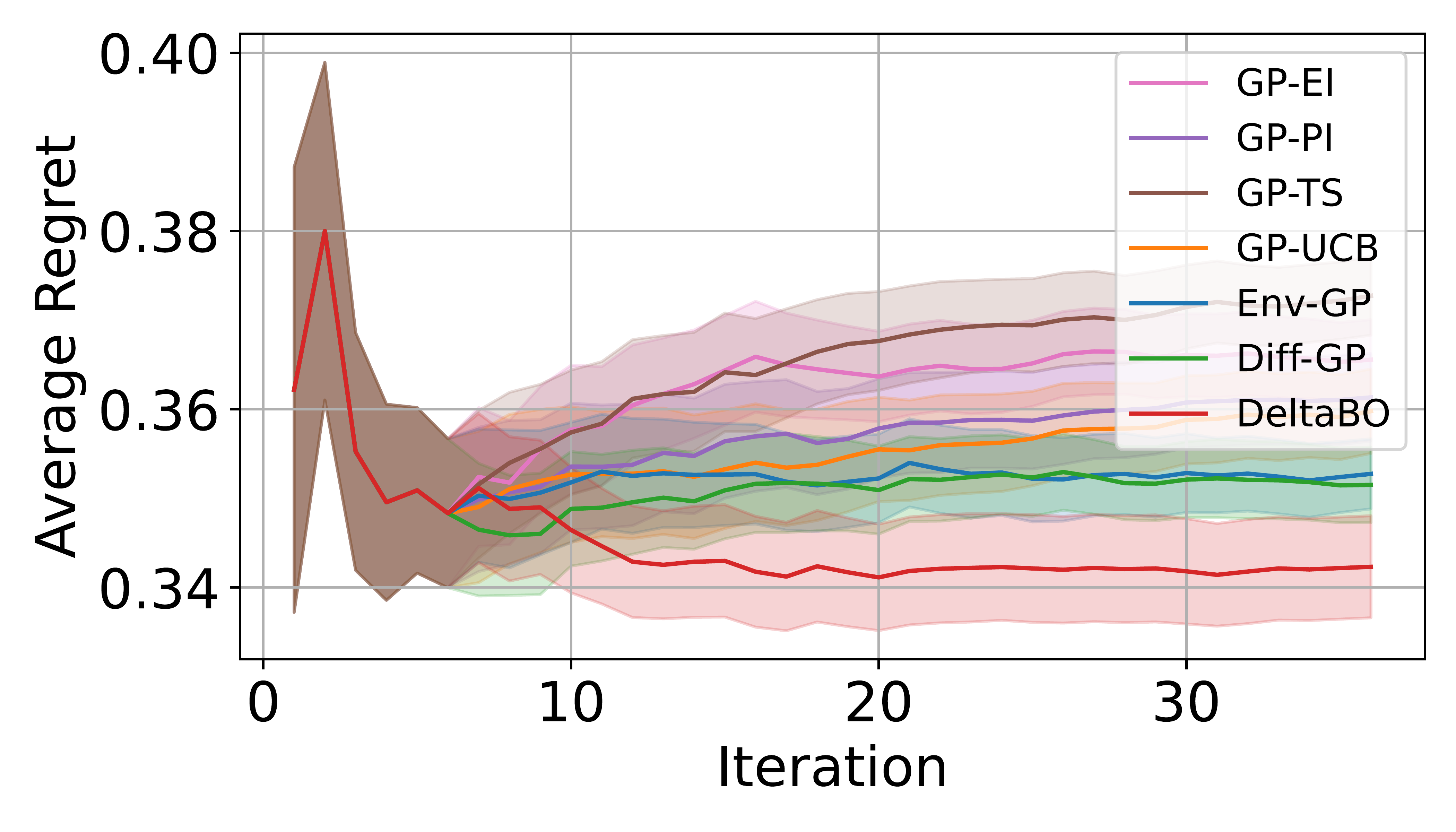}
    \caption{AutoML on MLP}
    \label{fig:rw-mlp_a}
\end{subfigure}
\begin{subfigure}{0.48\textwidth}
    \includegraphics[width=\linewidth]{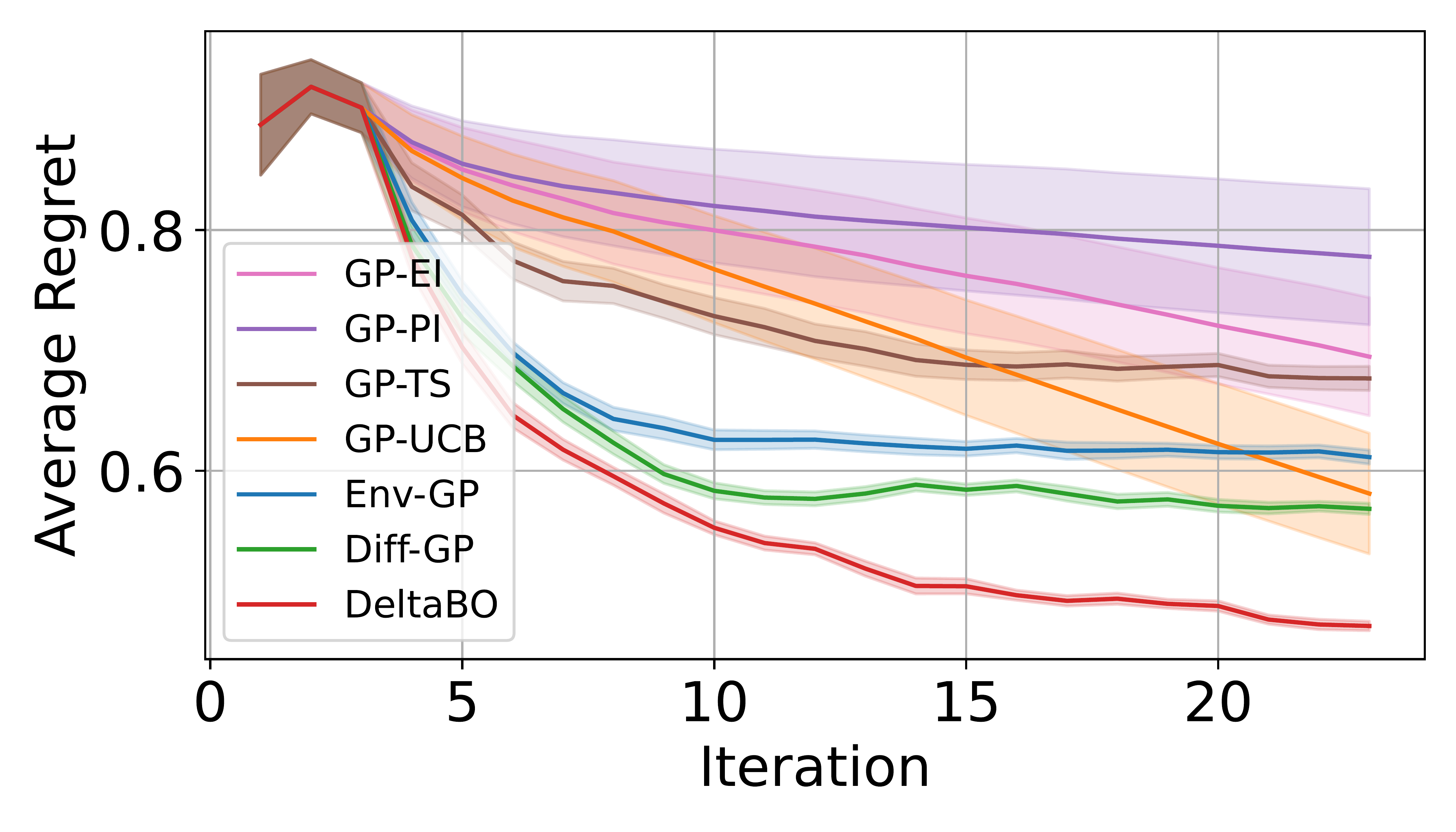}
    \caption{Synthetic Gaussain kernel}
    \label{fig:rw-gau_a}
\end{subfigure}
\begin{subfigure}{0.48\textwidth}
    \includegraphics[width=\linewidth]{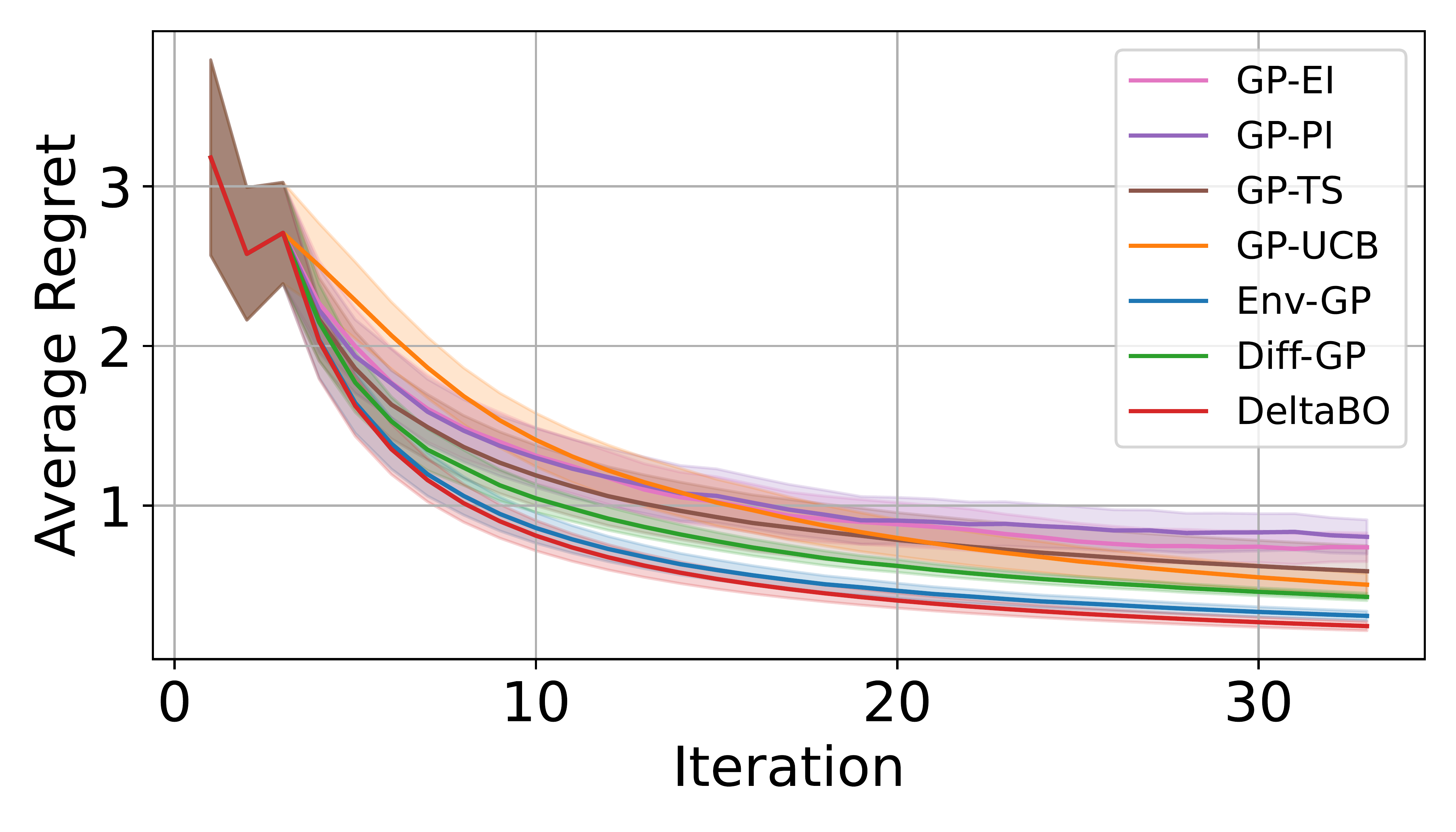}
    \caption{Synthetic Bohachevsky}
    \label{fig:rw-boh_a}
\end{subfigure}
\begin{subfigure}{0.48\textwidth}
    \includegraphics[width=\linewidth]{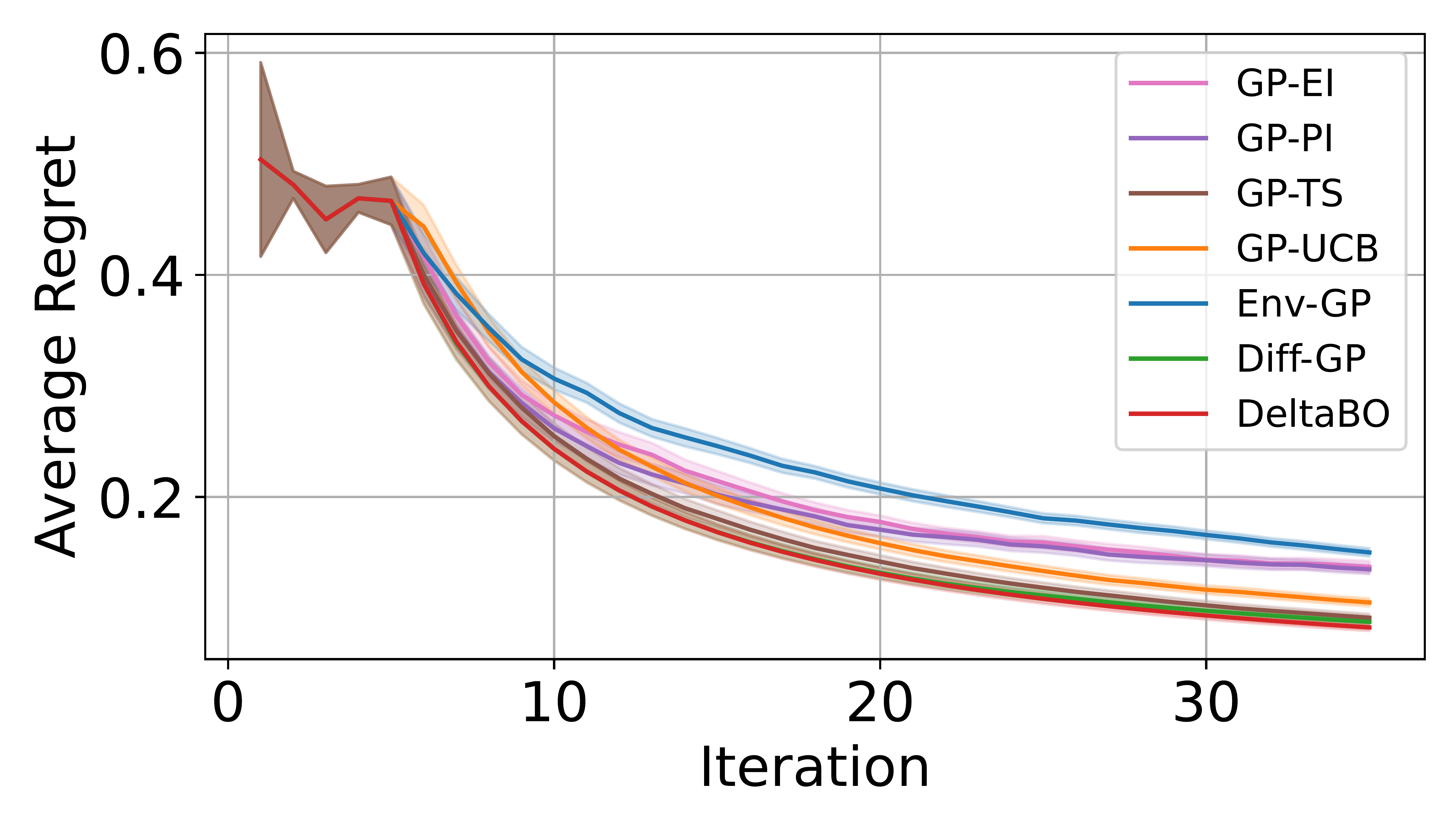}
    \caption{Assumptions-satisfied setting}
    \label{fig:rw-assumption_a}
\end{subfigure}
\caption{Average regrets of all compared algorithms.}
\label{average_regret}
\end{figure*}

\end{document}